\DeclareMathOperator*{\argmin}{argmin}
\newcommand{\PreserveBackslash}[1]{\let\temp=\\#1\let\\=\temp}
\newcommand{\norm}[1]{\lVert#1\rVert}
\newtheorem*{Pf1}{Proof of Proposition 1}
\newtheorem*{Pf2}{Proof of Proposition 2}
\newtheorem*{Pft}{Proof of Theorem 1}
\newtheorem{thm}{Theorem}
\newtheorem{prop}{Proposition}
\newtheorem{lem}{Lemma}
\newtheorem{rem}{Remark}
\newtheorem{asm}{Assumption}
\begin{document}
	%
	\title{From Common to Special: When Multi-Attribute Learning Meets Personalized Opinions}
	\author
	{
		Zhiyong Yang,\textsuperscript{1,2} Qianqian Xu,\textsuperscript{1}  Xiaochun Cao,\textsuperscript{1} Qingming Huang \textsuperscript{3,4} \thanks{The corresponding author.}\\
	\textsuperscript{1}{SKLOIS, Institute of Information Engineering, Chinese Academy of Sciences, Beijing, China}\\
	\textsuperscript{2}{School of Cyber Security, University of Chinese Academy of Sciences, Beijing, China}\\
	\textsuperscript{3}{University of Chinese Academy of Sciences, Beijing, China}\\
   \textsuperscript{4}{Key Lab of Intell. Info. Process., Inst. of Comput. Tech., CAS, Beijing, China}\\
		\{yangzhiyong, xuqianqian, caoxiaochun\}@iie.ac.cn, qmhuang@ucas.ac.cn	\\
	}
	\maketitle
	\begin{abstract}
		Visual attributes, which refer to human-labeled semantic annotations, have gained increasing popularity in a wide range of real world applications. Generally, the existing attribute learning methods fall into two categories: one focuses on learning user-specific labels separately for different attributes, while the other one focuses on learning crowd-sourced global labels jointly for multiple attributes. However, both categories ignore the joint effect of the two mentioned factors: the personal diversity with respect to the global consensus; and the intrinsic correlation among multiple attributes. To overcome this challenge, we propose a novel model to learn user-specific predictors across multiple attributes. In our proposed model, the diversity of personalized opinions and the intrinsic relationship among multiple attributes are unified in a common-to-special manner. To this end, we adopt a three-component decomposition.  Specifically, our model integrates a common cognition factor, an attribute-specific bias factor and a user-specific bias factor.  Meanwhile Lasso and group Lasso penalties are adopted to leverage efficient feature selection. Furthermore, theoretical analysis is conducted to show that our proposed method could reach reasonable performance. Eventually, the empirical study carried out in this paper demonstrates the effectiveness of our proposed method.
	\end{abstract}
	\section{Introduction}
	\begin{figure}
		\centering
		\subfigure{
			\includegraphics[scale=0.2]{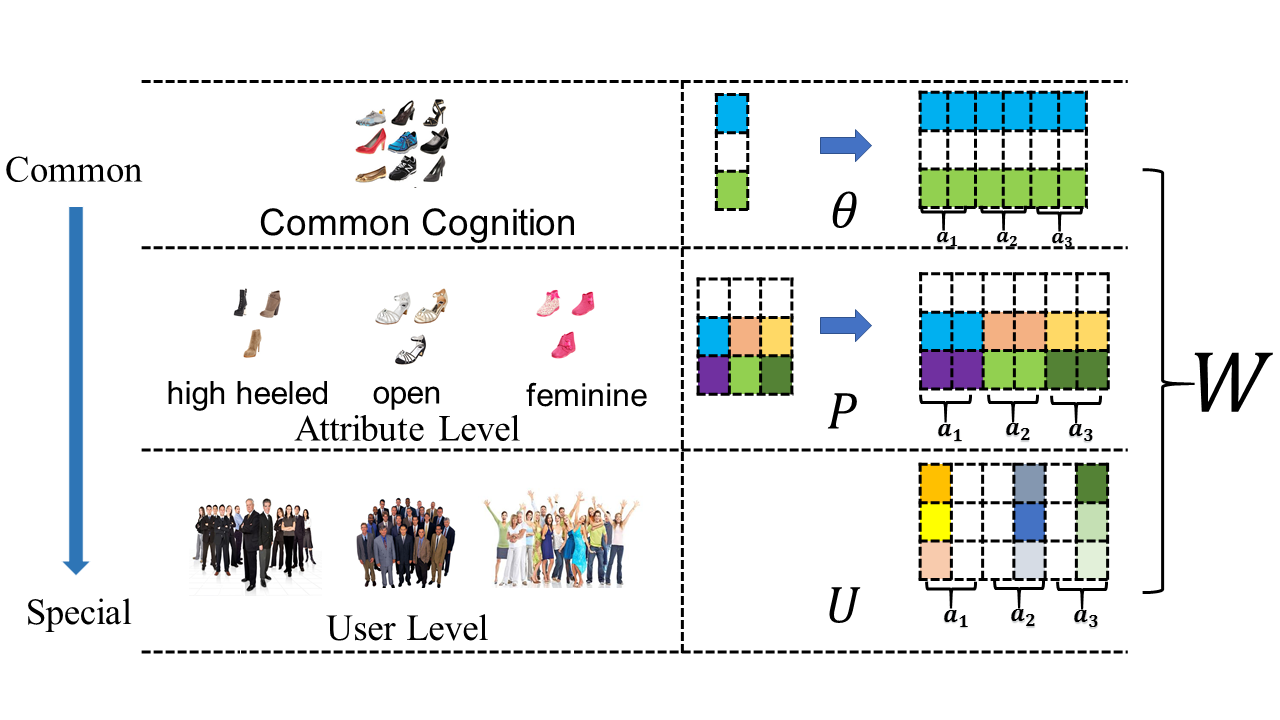}}
		\caption{Illustration of the three-component decomposition. Here $a_1$, $a_2$ and $a_3$ are the three mentioned attributes: high heeled, open and feminine. We assume that, for each attribute, there are two annotators who labeled the corresponding images. Note that we extend $\boldsymbol{\theta}$ and $\boldsymbol{P}$ to match the size of $\boldsymbol{U}$.}
	\end{figure}
	Visual attributes, which describe human labeled properties (like \textit{open}, \textit{fashionable}) for a given image, have shown its great potential as a mid-level semantic cue to enhance a variety of applications including face verification \cite{face1}, person re-identification\cite{reid1,reid2}, and zero-shot learning \cite{zeroshot1,zeroshot2,zeroshot3}, \textit{etc}. Generally speaking, there are two types of attributes:  i) binary attributes express whether a property is absent or present in a given image (like \textit{A is/is not open}); ii) relative attributes show the strength of an attribute conveyed in one image with respect to another image (like \textit{A is more/similarly/less open than B}) \cite{rel}. \\
	\indent On one hand, the attribute predictors are often trained with the crowd-sourced global labels. The justification of such an approach is that there is only one unique ground truth and the majority of annotators have to recognize this\textit{ ``correct answer''} simultaneously. However, different annotators might very well have distinct preferences, such that participants of the crowdsourced experiments might vote under different criteria or conditions. It might be misleading to merely look at a global consensus while ignoring personal diversity.
	On the other hand, practical visual applications often involve simultaneously learning multiple attributes together. In such a case, different attributes may intrinsically share some common patterns. One reason is that these attributes convey similar semantic meaning. Another reason is that they use common subsets of low level image features. In that sense, training multiple attribute predictors independently might not be an appropriate protocol.\\
	\indent Based on the discussion above, our goal is to solve two problems simultaneously in this paper: 1) learning user-specific attributes, 2) learning multiple attributes together with their shared information.\\
	\indent For 1), \cite{user1} regard user-specific attribute learning as an adaption process. In this work, a generate model is first trained based on a large pool of crowd-sourced labels. Then a small user-specific dataset is employed to adapt the generic model to user-specific predictors. Meanwhile, \cite{user2} argue that one attribute may fit to different shades (interpretations) for different groups of persons. Correspondingly, the authors proposed an automatic shade discovery method to leverage group-wise user-specific attributes. Though these existing works are designed to deal with user-specific attributes, they neglect the mutual interactions between different attributes. \\
	\indent Multi-task learning framework is well known as a standard solution for 2). Recently, many efforts have been made to improve multi-task learning.
	\cite{ASO} proposed an alternating structure optimization algorithm to decompose the predictive model of each task into two components: the task-specific feature mapping and task-shared feature mapping.
	For robust multi-task learning, \cite{robust1} proposed a corresponding method with a low-rank structure and a column-wise sparse structure.
	In \cite{robust2}, the low rank structure proposed in \cite{robust1} was replaced by a row-wise sparse structure to leverage selection of a common subset of features. Since the tasks from the same group are closer to each other than those from a different group, \cite{clustering1} proposed a clustering based multi-task learning framework. Motivated by the fact that the tasks should be related in terms of subsets of features, \cite{cocluster} proposed a novel multi-task learning method via task-feature co-clustering. As for applications, \cite{trajectory} proposed a robust dynamic multi-task method for trajectory regression. \\
	\indent There are also some existing works that focus on applying multi-task frameworks to attribute learning. One typical way to do this is to extend the existed multi-task algorithms to match attribute  learning. For instance, in \cite{relMTL1} the model proposed in \cite{robust2} was generalized to learn multiple relative attributes with their shared information. Meanwhile, some works employ deep learning methods to solve this problem by partially sharing the learned weights among different attributes. \cite{app4} proposed a multi-task restricted boltzman machine so as to learn a shared feature representation for multiple facial attribute learning. \cite{app1} incorporates identity and human attributes in learning discriminative face representations through a multi-task method. A deep multi-task learning approach was proposed in \cite{app3} to jointly estimate multiple heterogeneous attributes
	from a single face image. \cite{app2} also proposed a multi-task deep convolutional neural network with an auxiliary network at the top to capture attribute relationships. Though these works have successfully improved attribute learning with multi-task models, as was mentioned previously in this section, they all employ crowdsourced labels to train attribute predictors and ignore the disagreement among users.\\
	\indent Note that, except learning global labels for multiple attributes, multi-task frameworks are also suitable for learning user specific attributes where global patterns are necessary for capturing the public opinion, and task-specific patterns are indispensable as well for capturing user bias toward that public opinion. With such belief in mind, different with most of the previous works which partially met the requirement of our goal, we propose a hierarchical multi-task framework where task relationships are modeled on both the attribute level and the user level. \\
	\indent The main contributions of this paper are two-fold:
	\begin{itemize}
		\item To match the hierarchical nature of the underlying problem, we propose a common-to-special decomposition of the model weights, which captures the general cognition patten, attribute level bias and user specific bias, respectively. An optimization method is established based on the accelerated proximal gradient method.
		\item Theoretical analysis is performed in this paper. The corresponding results show that  our proposed algorithm could attain reasonable performance.
	\end{itemize}
	
	\section{Methodology}
	In this section, we'll present an attribute learning method to learn user specific labels across multiple attributes.
	We first introduce the notations used in this paper. Secondly, we propose our model formulation, which includes a common-to-special decomposition of the model weights and the corresponding objective function. Thirdly, we introduce our optimization method based on the accelerated proximal gradient method. Finally, the theoretical analysis is carried out to show the performance bound of our method.
	\subsection{Notations}
	\indent In this paper, scalars, vectors, and matrices are denoted as lowercase letters ($a$), bold lower case letters ($\boldsymbol{a}$), and bold upper case letters ($\boldsymbol{A}$). $\boldsymbol{X}_k$ denotes the $k$th row of $\boldsymbol{X}$.  $x_{ij}$ denotes the $(i,j)$ entry of a matrix $\boldsymbol{X}$. $\mathbb{P}(\cdot)$ denotes a probability measure. $[a]$ denotes the set :$\{1,2,\cdots,a\}$. Given an  index set $\mathcal{I}$, $\boldsymbol{A}^{\mathcal{I}}$ denotes a matrix that contains all the corresponding rows of $\boldsymbol{A}$, while $\boldsymbol{a}^\mathcal{I}$ represents the vector that contains the corresponding elements of vector $\boldsymbol{a}$.   $\norm{\cdot}_p$ denotes the $\ell_p$ norm : $\norm{x}_p ={(\sum\limits_{i}x_i^p)}^{(1/p)}$. $\norm{\cdot}_{p,q}$ denotes the $\ell_{p,q}$ norm :$\Big(\sum\limits_{i}\big(\sum\limits_{j}(x_{ij}^q)^{(1/q)}\big)^p\Big)^{(1/p)}$. $\norm{\cdot}$ denotes the $\ell_2$ norm. $\left<\cdot,\cdot\right>$ denotes the inner product for two matrices or two vectors. If $f(x) = o(g(x))$, we have $\lim\limits_{x \rightarrow 
		+\infty} \dfrac{f(x)}{g(x)}=0$ . $\oplus$ denotes the direct sum of two matrices.\\
	\subsection{Model Formulation}
	\indent Assume that we have $n_a$ attributes to be evaluated, and that, for the $i$th attribute, we are given user-specific labels from $n_{u_i}$ different workers. Then the training data could be represented as:
	\begin{equation*}
	\begin{split}
	\mathcal{T} = \left\{(\boldsymbol{X}^{(1,1)}, \boldsymbol{y}^{(1,1)}), \cdots, (\boldsymbol{X}^{(n_a,n_{u_{n_a}})}, \boldsymbol{y}^{(n_a,n_{u_{n_a}})})\right\}
	\end{split}
	\end{equation*}
	where $n_{ij}$ is the number of images the $j$th user for the $i$th attribute labeled. The input feature is preprocessed such that
	\begin{equation}\label{A1}
	\begin{split}
	\sum\limits_{k=1}^{n_{ij}}\left(x^{(i,j)}_{k,l}\right)^2=1
	\end{split}
	\end{equation}, where $x^{(i,j)}_{k,l}$ is the $(k,l)$th entry of $\boldsymbol{X}^{(i,j)}$.\\
	\indent For binary attributes, $\boldsymbol{X}^{(i,j)} \in \mathbb{R}^{n_{ij} \times d}$
	is the corresponding feature matrix for the images that the $j$th user of the $i$th attribute labeled. Each row of $\boldsymbol{X}^{(i,j)}$ represents the low-level feature for a corresponding image.   $\boldsymbol{y^{(i,j)}} \in \{-1,1\}^{n_{ij}}$ is the corresponding label vector \footnote[1]{If not explained, $A^{(i,j)}$ denotes the corresponding variable of $A$ for the $j$th user of the $i$th attribute; $A^{(i)}$ denotes the corresponding variable for the $i$th attribute.}. If $y^{(i,j)}_k =1 $, then the user thinks that the corresponding attribute is present in the $k$th image, otherwise it will be labeled as -1.\\
	\indent For relative attributes, we need to solve a ranking problem. The corresponding users are given a set of image pairs $\{(\boldsymbol{x}^{(i,j)}_{1,k},\boldsymbol{x}^{(i,j)}_{2,k})\}_{k=1}^{n_{ij}}$. Since we adopt linear models in this paper, we define the $k$th row of $\boldsymbol{X}^{(i,j)}$ as $\boldsymbol{X}_k^{(i,j)} = \boldsymbol{x}^{(i,j)}_{1,k}- \boldsymbol{x}^{(i,j)}_{2,k}$. For the $k$th pair $y^{(i,j)}_k =1 $ if the user thinks that  the corresponding attribute has a stronger expression in the former image (\textit{say 1 is more open than 2}); $y^{(i,j)}_k =0$ if the user thinks that both images show similar strength for the current attribute (\textit{say 1 is as open as 2}); $y^{(i,j)}_k =-1 $ if the user thinks that  the corresponding attribute has a weaker expression in the former image (\textit{say 1 is less open than 2}). \\
	\indent As mentioned in the introduction section, our goal is  to learn a predictor $\boldsymbol{f}^{(i,j)}$ for each of the personalized label vectors $\boldsymbol{y}^{(i,j)}$. In this paper, we assume that $f^{(i,j)}(\cdot)$ has a linear form :
	\begin{equation}
	\begin{split}
	\boldsymbol{f}^{(i,j)} =\boldsymbol{X^{(i,j)}} \boldsymbol{w^{(i,j)}}
	\end{split}
	\end{equation}
	where $\boldsymbol{w}^{(i,j)}$ is the corresponding model weight. \\
	\indent Now we are ready to introduce the modeling of  $\boldsymbol{w}^{(i,j)}$.
	Note that our underlying problem could be comprehended in a common-to-special manner: there is a common pattern that captures the general cognition of a given object; an  attribute-specific pattern is also necessary to express the attribute-level common pattern; finally, a user-specific factor is necessary to describe the personalized preference. As a result, we adopt a three-component additive decomposition of $\boldsymbol{w}^{(i,j)}$ :
	\begin{equation}
	\begin{split}
	\boldsymbol{w}^{(i,j)} = \boldsymbol{\theta} + \boldsymbol{p}^{(i)} + \boldsymbol{\boldsymbol{u}}^{(i,j)}
	\end{split}
	\end{equation}
	The practical meaning of these three components could be explained as follows:
	\begin{itemize}
		\item $\boldsymbol{\theta}$(General Cognition Factor): $\boldsymbol{\theta} \in \mathbb{R}^{d\times1}$ is the  global factor that captures the overall cognition for the given class of object (\textit{say for shoes dataset, this factor captures the overall cognition about shoes}). $\theta$ is shared among all subtasks.
		\item $\boldsymbol{p}^{(i)}$(Attribute Specific Bias Factor): An  attribute-specific factor that captures the  bias of the $i$ th attribute with respect to the global cognition. For mathematical convenience, we denote $\boldsymbol{P}=[\boldsymbol{p}^{(1)},\cdots, \boldsymbol{p}^{(n_a)}]$, and we have $\boldsymbol{P}\in \mathbb{R}^{d\times n_a}$.
		\item $\boldsymbol{u}^{(i,j)}$ (User Specific Bias Factor): A user specific factor that captures the personal bias for the $j$th user of the $i$th attribute. In order to simplify the mathematical expressions, we define $\boldsymbol{U}^{(t)} = [\boldsymbol{u}^{(t,1)},\cdots,\boldsymbol{u}^{(t,n_{u_t})}]$,  $\boldsymbol{U} = [\boldsymbol{U}^{(1)},\cdots,\boldsymbol{U}^{(n_a)}]$,  thus we have $\boldsymbol{U} \in \mathbb{R}^{d \times n_u}$, where $n_u = \sum\limits_{i=1}^{n_a}n_{u_i}$.
	\end{itemize}
	Since we adopt a linear form for $\boldsymbol{f}^{(i,j)}$, it is natural to assume that the real response $\boldsymbol{y}^{(i,j)}$ could be interpreted as the true predictor in our proposed model :$\boldsymbol{f}^{*^{(i,j)}} = \boldsymbol{x}^{(i,j)}\boldsymbol{w}^{*^{(i,j)}}$  plus a Gaussian noise $\boldsymbol{\delta}^{(i,j)} \sim \mathcal{N}(\boldsymbol{0}, \sigma^2\boldsymbol{I})$: \\
	\begin{equation}\label{A3}
	\begin{split}
	\boldsymbol{y}^{(i,j)} =  \boldsymbol{f}^{*^{(i,j)}} + \boldsymbol{\delta}^{(i,j)}
	\end{split}
	\end{equation}
	where $\boldsymbol{w}^{*^{(i,j)}}= \boldsymbol{\theta}^* + \boldsymbol{p}^{*^{(i)}}+\boldsymbol{u}^{*^{(i,j)}}$.\\
	\indent As for the objective function, we adopt the least square loss as our empirical loss: $L(\boldsymbol{W})$ and a general regularizer 
	$\Omega(\cdot)$. We could thus formulate our problem as $(P1)$ as:
	\begin{equation*}
	(P1): \ \ \min\limits_{\boldsymbol{W}}   \underbrace{\sum\limits_{i=1}^{n_a}\sum\limits_{j=1}^{n_{u_i}} \norm{\boldsymbol{y}^{(i,j)}-\boldsymbol{X}^{(i,j)}\boldsymbol{w}^{(i,j)}}^2}_{L(\boldsymbol{W})} + \Omega(\boldsymbol{W})
	\end{equation*}
	where $\boldsymbol{W}:=\{\boldsymbol{w}^{(i,j)}\}_{(i,j)}$ is the set of all parameters.  \\
	\indent For our problem,  user annotated labels are often limited. Furthermore, the low level features for an image are located in a high dimensional space. Then it is necessary to encourage sparse models to reduce model complexity. To preserve the relationship among subtasks, we also need to leverage a shared feature subset.
	Consequently, we penalize $\boldsymbol{\theta}$ and $\boldsymbol{P}$ with $\ell_1$ norm and $\ell_{1,2}$ norm,  respectively. Meanwhile,  $\boldsymbol{u}^{(i,j)} \neq \boldsymbol{0}$ only when the corresponding user has a specific bias with respect to the popular opinion. We penalize $\boldsymbol{U}^\top$ with $\ell_{1,2}$ norm to leverage column-wise sparsity. Above all, $\Omega(\boldsymbol{W})$ could be represented as follows :
	\begin{equation*}
	\begin{split}
	\Omega(\boldsymbol{W}) = \lambda_1\norm{\boldsymbol{\theta}}_{1} + \lambda_2\norm{\boldsymbol{P}}_{1,2} + \lambda_3 \norm{\boldsymbol{U}^\top}_{1,2}
	\end{split}
	\end{equation*}
	\indent Putting all  together, $(P1)$ could be reformed as :
	\begin{equation}\label{finpro}
	\begin{split}
	\min\limits_{\boldsymbol{W}}   \sum\limits_{i=1}^{n_a}\sum\limits_{j=1}^{n_{u_i}} \norm{\boldsymbol{y}^{(i,j)}-\boldsymbol{X}^{(i,j)}(\boldsymbol{\theta} + \boldsymbol{p}^{(i)} + \boldsymbol{\boldsymbol{u}}^{(i,j)})}^2 \\
	+\lambda_1\norm{\boldsymbol{\theta}}_{1} + \lambda_2\norm{\boldsymbol{P}}_{1,2} + \lambda_3 \norm{\boldsymbol{U}^\top}_{1,2}
	\end{split}
	\end{equation}
	Figure 1  illustrates the expected structure of the three components in the proposed model. It could be seen that both the attribute level and the user level task correlations are included in our proposed model. \\
	\indent To end this section, we introduce two important mathematical properties of $(P1)$ as proposition 1 and proposition 2. Refer to our supplementary materials for a detailed proof of proposition 1 and proposition 2.
	\begin{prop}[Global Optimality]
		$P1$ is jointly convex with respect to $\boldsymbol{\theta}$, $\boldsymbol{P}$, $\boldsymbol{U}$
	\end{prop}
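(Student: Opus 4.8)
The plan is to show joint convexity by stacking all parameters into a single variable and verifying that the objective is a sum of convex functions of that variable. Concretely, I would collect $\boldsymbol{\theta}$, $\boldsymbol{P}$ and $\boldsymbol{U}$ into one vector $\boldsymbol{z}=(\boldsymbol{\theta}^\top, \mathrm{vec}(\boldsymbol{P})^\top, \mathrm{vec}(\boldsymbol{U})^\top)^\top$, so that ``joint convexity in $\boldsymbol{\theta},\boldsymbol{P},\boldsymbol{U}$'' is exactly convexity of the objective as a function of $\boldsymbol{z}$. Since the objective of $(P1)$ splits as $L(\boldsymbol{W})+\Omega(\boldsymbol{W})$ and convexity is preserved under addition, it suffices to treat the loss and the regularizer separately.

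For the loss, the key observation is that the residual of each subtask is an affine function of $\boldsymbol{z}$. Indeed, for fixed $(i,j)$ the combination $\boldsymbol{\theta}+\boldsymbol{p}^{(i)}+\boldsymbol{u}^{(i,j)}$ is a linear map that selects and sums the relevant blocks of $\boldsymbol{z}$, and left-multiplying by $\boldsymbol{X}^{(i,j)}$ and subtracting the constant $\boldsymbol{y}^{(i,j)}$ keeps it affine. I would then invoke the standard composition rule: the squared Euclidean norm $\norm{\cdot}^2$ is convex, and the composition of a convex function with an affine map is convex, so each summand $\norm{\boldsymbol{y}^{(i,j)}-\boldsymbol{X}^{(i,j)}(\boldsymbol{\theta}+\boldsymbol{p}^{(i)}+\boldsymbol{u}^{(i,j)})}^2$ is convex in $\boldsymbol{z}$. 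Summing over all $(i,j)$ preserves convexity, establishing that $L$ is jointly convex.

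For the regularizer, I would use that every norm is convex, a direct consequence of the triangle inequality together with absolute homogeneity. Thus $\norm{\boldsymbol{\theta}}_1$, $\norm{\boldsymbol{P}}_{1,2}$ and $\norm{\boldsymbol{U}^\top}_{1,2}$ are each convex in their own block. The one point requiring a little care is that each penalty depends on a single block while being constant in the others; but a function that is convex on one block and ignores the remaining coordinates is still jointly convex, because its restriction to any line in $\boldsymbol{z}$-space reduces to a convex function of a single block. Hence each penalty term is jointly convex, and $\Omega$, being their nonnegative-weighted sum, is jointly convex as well.

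Finally, adding the two jointly convex pieces yields that the whole objective is jointly convex in $\boldsymbol{\theta},\boldsymbol{P},\boldsymbol{U}$, which is the claim. I do not expect a genuine obstacle here; the statement is essentially a bookkeeping exercise in the calculus of convex functions. The one place where I would be deliberate is in writing out the affine map in the loss explicitly, to confirm that the shared block $\boldsymbol{\theta}$ and the attribute-shared blocks $\boldsymbol{p}^{(i)}$ are correctly coupled across the different subtasks $(i,j)$, since that coupling is precisely the reason joint convexity, rather than mere block-wise convexity, is the right notion to verify.
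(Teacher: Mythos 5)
Your proposal is correct, and it is essentially the paper's own argument made explicit: the paper's proof is a one-line appeal to convexity-preserving operations (affine composition, nonnegative sums, convexity of norms), which are precisely the rules you spell out and verify block by block. No gap and no genuinely different route — yours is just the fully written-out version of the same reasoning.
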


	\begin{prop}[Lipschitz Continuous Gradient]
		Given two arbitrary feasible solutions $W$ and  $W'$, we have :
		\begin{equation*}
		\norm{\nabla{L(\tilde{W})} - \nabla{L(\tilde{W}')}} \leq \rho  \norm{\tilde{W} - \tilde{W}' }
		\end{equation*}
		where: \\ $\rho =6n_u\sqrt{(n_u+n_a+1)}\max\limits_{i,j}{\left[\sigma_1\left(\boldsymbol{X}^{(i,j)}\right)\right]}^2
		$\\ $\tilde{W} = [vec(\boldsymbol{\theta})^\top , vec(\boldsymbol{P})^\top, vec(\boldsymbol{U})^\top]^\top$ \\
		$\tilde{W'} = [vec(\boldsymbol{\theta'})^\top , vec(\boldsymbol{P'})^\top, vec(\boldsymbol{U'})^\top]^\top$,
		$n_u = \sum\limits_{i=1}^{n_a}n_{u_i}$
	\end{prop}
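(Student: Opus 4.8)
The plan is to exploit that $L(\boldsymbol{W})$ is a \emph{quadratic} function of the stacked variable $\tilde{W}$, so that $\nabla L$ is affine and the difference $\nabla L(\tilde{W}) - \nabla L(\tilde{W}')$ depends only on $\Delta\tilde{W} := \tilde{W} - \tilde{W}'$. Proving the inequality then reduces to controlling the operator norm of the (constant) Hessian of $L$, which I would carry out by bounding the three gradient blocks $\nabla_{\boldsymbol{\theta}}$, $\nabla_{\boldsymbol{p}^{(i)}}$, $\nabla_{\boldsymbol{u}^{(i,j)}}$ separately and then recombining them.

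First I would compute the partial gradients of the least-square loss. Differentiating each subtask term and using that $\partial\boldsymbol{w}^{(i,j)}/\partial\boldsymbol{\theta}$, $\partial\boldsymbol{w}^{(i,j)}/\partial\boldsymbol{p}^{(i)}$ and $\partial\boldsymbol{w}^{(i,j)}/\partial\boldsymbol{u}^{(i,j)}$ are identities on the relevant indices, one gets $\nabla_{\boldsymbol{\theta}} L = 2\sum_{i,j}(\boldsymbol{X}^{(i,j)})^\top(\boldsymbol{X}^{(i,j)}\boldsymbol{w}^{(i,j)} - \boldsymbol{y}^{(i,j)})$, with $\nabla_{\boldsymbol{p}^{(i)}} L$ summing only over $j$ and $\nabla_{\boldsymbol{u}^{(i,j)}} L$ involving only the single pair $(i,j)$. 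Subtracting the gradients evaluated at $\tilde{W}$ and $\tilde{W}'$ cancels the data terms $-\boldsymbol{y}^{(i,j)}$, so every block becomes a sum of the symmetric matrices $\boldsymbol{A}^{(i,j)} := (\boldsymbol{X}^{(i,j)})^\top\boldsymbol{X}^{(i,j)}$ applied to $\Delta\boldsymbol{w}^{(i,j)} = \Delta\boldsymbol{\theta} + \Delta\boldsymbol{p}^{(i)} + \Delta\boldsymbol{u}^{(i,j)}$.

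Next I would bound each block. Since $\norm{\boldsymbol{A}^{(i,j)}}_2 = \sigma_1(\boldsymbol{X}^{(i,j)})^2 \le \bar{\sigma}^2$, where $\bar{\sigma} := \max_{i,j}\sigma_1(\boldsymbol{X}^{(i,j)})$, the triangle inequality together with the Cauchy--Schwarz estimates $\norm{\sum_{k=1}^N \boldsymbol{a}_k} \le \sqrt{N}(\sum_k \norm{\boldsymbol{a}_k}^2)^{1/2}$ and $(a+b+c)^2 \le 3(a^2+b^2+c^2)$ yields $\sum_{i,j}\norm{\Delta\boldsymbol{w}^{(i,j)}}^2 \le 3n_u\norm{\Delta\tilde{W}}^2$; here I use $\sum_i n_{u_i}\norm{\Delta\boldsymbol{p}^{(i)}}^2 \le n_u\norm{\boldsymbol{P}-\boldsymbol{P}'}^2$ (because $\max_i n_{u_i} \le n_u$) and $\norm{\Delta\tilde{W}}^2 = \norm{\Delta\boldsymbol{\theta}}^2 + \norm{\boldsymbol{P}-\boldsymbol{P}'}^2 + \norm{\boldsymbol{U}-\boldsymbol{U}'}^2$. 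Propagating these through the $\boldsymbol{\theta}$-, $\boldsymbol{p}^{(i)}$- and $\boldsymbol{u}^{(i,j)}$-blocks, I expect each of the $1 + n_a + n_u$ blocks to be bounded by a common quantity of order $n_u\bar{\sigma}^2\norm{\Delta\tilde{W}}$.

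Finally, because $\norm{\nabla L(\tilde{W}) - \nabla L(\tilde{W}')}^2$ equals the sum of the squared norms of these $1 + n_a + n_u$ blocks, bounding each by the common quantity and taking a square root produces the factor $\sqrt{n_u + n_a + 1}$ and, after collecting the numerical constants, the stated $\rho$. I expect the main obstacle to be the bookkeeping in this recombination: the three factors are shared at different granularities (global, per-attribute, per-user), so the grouped sums $\sum_{i,j}$, the partial sums $\sum_j$, and the singleton terms must each be traced back to the single quantity $\norm{\Delta\tilde{W}}^2$ without over- or under-counting. It is precisely this accounting, together with the deliberately crude $\sqrt{\#\text{blocks}}$ step, that pins down the constant $6$ and the $\sqrt{n_u+n_a+1}$ growth rather than a sharper bound one could in principle read off from the Hessian directly.
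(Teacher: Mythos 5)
Your proposal is correct, and every estimate in it checks out: the block gradients are exactly those the paper computes, the data terms cancel because the loss is quadratic, and the chain $\sum_{i,j}\norm{\Delta\boldsymbol{w}^{(i,j)}}^2 \le 3n_u\norm{\Delta\tilde{W}}^2$, the common per-block bound of order $n_u\bar{\sigma}^2\norm{\Delta\tilde{W}}$ (with $\bar{\sigma}^2 = \max_{i,j}[\sigma_1(\boldsymbol{X}^{(i,j)})]^2$, the paper's $C$), and the final sum-of-squares recombination over the $1+n_a+n_u$ blocks are all valid. Your recombination, however, is genuinely different from the paper's. The paper never works with squared norms after the first line: it bounds the Euclidean norm of the stacked gradient difference by the \emph{sum} of the block norms (via $\sqrt{\sum_i a_i} \le \sum_i \sqrt{a_i}$), bounds each of the three groups (the $\boldsymbol{\theta}$-block, all $\boldsymbol{p}^{(i)}$-blocks, all $\boldsymbol{u}^{(i,j)}$-blocks) by the same quantity $2C\sum_{i,j}\norm{\Delta\boldsymbol{w}^{(i,j)}}$ --- which is where its constant $6 = 2\times 3$ comes from --- and only at the very end converts the $\ell_1$ norm of the vector of \emph{variable}-difference block norms back to $\ell_2$, paying $\sqrt{n_a+1+n_u}$ there. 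You instead stay in the Euclidean picture throughout, so in your argument the factor $\sqrt{n_u+n_a+1}$ arises from counting \emph{gradient} blocks rather than from an $\ell_1$-to-$\ell_2$ conversion on the variable blocks, and your numerical constant works out to $2\sqrt{3}$ (one factor $\sqrt{3}$ from $(a+b+c)^2 \le 3(a^2+b^2+c^2)$, the factor $2$ from the gradient of the squared loss), not $6$. Since $2\sqrt{3} < 6$, your bound is strictly sharper and implies the stated $\rho$ a fortiori; the only inaccuracy in your write-up is the closing claim that your accounting ``pins down the constant $6$'' --- it pins down $2\sqrt{3}$, and $6$ is simply what the paper's looser, unsquared bookkeeping produces.
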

	\subsection{Optimization Method}
	According to proposition 1 and proposition 2, $L(\boldsymbol{W})$ is a convex and smooth function while $\Omega(\boldsymbol{W})$ is a convex non-smooth function. Similar as the related literatures \cite{fista,robust1,robust2}, the accelerated proximal gradient method is employed to solve $(P1)$.\\
	\indent According to proposition 1, $L(\cdot)$ is differentiable with Lipschitz continuous gradient. According to the basic mathematical properties of Lipschitz continuous functions, at iteration step $k$, for any reference point $\boldsymbol{W}^{ref_k} = (\boldsymbol{\theta}^{ref_k}, \boldsymbol{P}^{ref_k},\boldsymbol{U}^{ref_k})$, $\exists \rho_k >0$, such that :
	
	\begin{flalign}\label{upper bound}
	\begin{split}
	& L(\boldsymbol{W}) \leq 
	L(\boldsymbol{W}^{ref_k}) +
	\left<\nabla_P L(\boldsymbol{W}^{ref_k}), \Delta \boldsymbol{P}\right> \\
	&	+ \left<\nabla_{\theta} L(\boldsymbol{W}^{ref_k}),  \Delta \boldsymbol{\theta} \right> +
	\left<\nabla_U L(\boldsymbol{W}^{ref_k}),  \Delta \boldsymbol{W}\right>\\
	&+\dfrac{\rho_k}{2}\norm{\Delta\boldsymbol{P}}_F^2 + \dfrac{\rho_k}{2}\norm{\Delta\boldsymbol{\theta}}^2_2 + \dfrac{\rho_k}{2}\norm{\Delta\boldsymbol{U}}^2_F\\
	& \overset{def}{=} \hat{L}_{\boldsymbol{W}^{ref_k},\rho}(\boldsymbol{W})
	\end{split}
	\end{flalign}
	
	Following the Majorization-Minimization (MM) \cite{MM} scheme, at the k-th iteration, we could then solve $(P2)$ instead of updating the weights based on original problem:
	\begin{equation*}\label{P2}
	\begin{split}
	(P2) :
	(\boldsymbol{\theta}^k,\boldsymbol{P}^k, \boldsymbol{U}^k)  := \argmin_{\boldsymbol{\theta},\boldsymbol{P},
		\boldsymbol{U}} \hat{L}_{\boldsymbol{W}^{ref_k},\rho_k}(\boldsymbol{W}) + \Omega{(W)}
	\end{split}
	\end{equation*}
	It is obvious that $\boldsymbol{\theta},\boldsymbol{P}, \boldsymbol{U}$ are decoupled in $\hat{L}_{\boldsymbol{W}^{ref_k},\rho_k}(\boldsymbol{W})$. Hence, solving $(P2)$ is equivalent to solving the following three proximal subproblems simultaneously:
	\begin{equation}\label{P2}
	\begin{split}
	\boldsymbol{\theta}^k  := \argmin_{\boldsymbol{\theta}} \dfrac{1}{2} \left\norm{\boldsymbol{\theta} -\tilde{\boldsymbol{\theta}^k} \right}^2 + \frac{\lambda_1}{\rho_k} \norm{\boldsymbol{\theta}}_1
	\end{split}
	\end{equation}
	\begin{equation}\label{P2}
	\begin{split}
	\boldsymbol{P}^k  := \argmin_{\boldsymbol{P}} \dfrac{1}{2}\left\norm{\boldsymbol{P} - \tilde{\boldsymbol{P}}^k\right}^2 + \frac{\lambda_2}{\rho_k} \norm{\boldsymbol{P}}_{1,2}
	\end{split}
	\end{equation}	
	\begin{equation}\label{P2}
	\begin{split}
	\boldsymbol{U}^{k}  := \argmin_{\boldsymbol{U}} \dfrac{1}{2}\left\norm{\boldsymbol{U} - \tilde{\boldsymbol{U}}^{k}\right}^2 + \frac{\lambda_3}{\rho_k} \norm{\boldsymbol{U}^\top}_{1,2}\\ 
	\end{split}
	\end{equation}	
	where
	\begin{equation*}
	\begin{split}
	\tilde{\boldsymbol{\theta}}^k = \boldsymbol{\theta}^{ref_k} - \dfrac{1}{\rho_k}\nabla_{\theta}L(\boldsymbol{W}^{ref_k})
	\end{split}
	\end{equation*}
	
	\begin{equation*}
	\begin{split}
	\tilde{\boldsymbol{P}}^k=\boldsymbol{P}^{ref_k} - \dfrac{1}{\rho_k}\nabla_{P}L(\boldsymbol{W}^{ref_k})
	\end{split}
	\end{equation*}
	\begin{equation*}
	\begin{split}
	\tilde{\boldsymbol{U}}^{k}=\boldsymbol{U}^{ref_k} - \dfrac{1}{\rho_k}\nabla_{\boldsymbol{U}}L(\boldsymbol{W}^{ref_k})
	\end{split}
	\end{equation*}
	\begin{algorithm}[ht]           
		\caption{The accelerated proximal gradient method for solving $(P1)$}
		\KwIn{$\mathcal{T}$, $\lambda_1$,$\lambda_2$,$\lambda_3$,$\rho_0>0$ ,$\eta >1$}
		
		\KwOut{$\boldsymbol{\theta}$,$\boldsymbol{P}$,$\boldsymbol{U}$}

		Initialize $\boldsymbol{\theta}^0$,$\boldsymbol{P}^0$,$\boldsymbol{U}^0$\;
		$\boldsymbol{\theta}^{ref} :=\boldsymbol{\theta}^{0} $, 
		$\boldsymbol{P}^{ref} :=\boldsymbol{P}^{0} $, 
		$\boldsymbol{U}^{ref} :=\boldsymbol{U}^{0} $,$t_1:=1$,$k:=1$\;
		\While{Not Converged}{
			Solve $\boldsymbol{W}^k = (\boldsymbol{\theta}^k,\boldsymbol{P}^k,\boldsymbol{U}^k)$ with Eq.(\ref{solv1})-Eq.(\ref{solv3})\;
			Find the smallest $i_k$ such that when $\tilde{\rho} = \eta^{i_k}\tilde{\rho}_{k-1} $ : $L(\boldsymbol{W^k}) \leq \hat{L}_{\boldsymbol{W}^{ref_k},\tilde{\rho} }(\boldsymbol{W}^k)$ \;
			$\boldsymbol{\rho}_k :=\tilde{\rho}$\;
			update $\boldsymbol{W}^k$ again\; 
			$t_{k+1} = \dfrac{1+\sqrt{1+4t_k^2}}{2}$\;
			$dt:= \dfrac{t_k-1}{t_{k+1}}$\;
			$\boldsymbol{\theta}^{ref}:=
			\boldsymbol{\theta}^k +dt(\boldsymbol{\theta}^k -\boldsymbol{\theta}^{k-1} ) $\;
			$\boldsymbol{P}^{ref}:=
			\boldsymbol{P}^k +dt(\boldsymbol{P}^k -\boldsymbol{P}^{k-1} ) $\;
			$\boldsymbol{U}^{ref}:=
			\boldsymbol{U}^k +dt(\boldsymbol{U}^k -\boldsymbol{U}^{k-1} ) $\;
			$k:=k+1$\;
		}
	\end{algorithm} 
	
	\noindent All of these subproblems admit closed-form solutions :
	\begin{equation}\label{solv1}
	\begin{split}
	{\theta}^k_i  := sign(\tilde{\theta}^k_i)\left(\left|\tilde{\theta}^k_i\right|-\dfrac{\lambda_1}{\rho_k}\right)_+
	\end{split}
	\end{equation}
	\begin{equation}\label{solv2}
	\begin{split}
	\boldsymbol{P}_{i}^k := \left(1-\dfrac{\lambda2}{\rho_k \norm{\tilde{\boldsymbol{P}}_i^{k}}}\right)_+\tilde{\boldsymbol{P}}_i^{k}
	\end{split}
	\end{equation}	
	\begin{equation}\label{solv3}
	\begin{split}
	\left(\boldsymbol{u}^{(i,j)}\right)^k :=
	\left(1-\dfrac{\lambda3}{\rho_k \left\norm{	\left(\tilde{\boldsymbol{u}}^{(i,j)}\right)^k\right}}\right)_+\left(\tilde{\boldsymbol{u}}^{(i,j)}\right)^k
	\end{split}
	\end{equation}
	Furthermore, the nestrov's acceleration strategy is employed to update the reference point $\boldsymbol{W}^{ref}$. Integrating all the results, an efficient algorithm to solve $(P1)$ is introduced as Algorithm 1. According to the theoretical analyses proposed in \cite{fista}, $\boldsymbol{W}^k$ could converge to a global optimal solution with rate $\mathcal{O}(\dfrac{1}{k^2})$, which is the provable optimal rate for first-order methods.
	\subsection{Theoretical Analysis}
	Following \cite{robust2}, we will propose the performance bound of our algorithm based on assumption 1.\\
	\indent Here we define a set $\mathcal{N}(\boldsymbol{A})$ for a matrix (vector)  $\boldsymbol{A}$ as the indexes for zero rows (entries): $\mathcal{N}(\boldsymbol{A}) = \{i: \boldsymbol{A}_i=\boldsymbol{0}\}$, and $\mathcal{N}_\perp(A)$ as the complement of $\mathcal{N}(\boldsymbol{A})$: $\mathcal{N}_\perp(A) =  \{i: \boldsymbol{A}_i \neq \boldsymbol{0}\}$. $\left|\mathcal{N}(\boldsymbol{A})\right|$ is the number of zero rows (entries) of a matrix(vector) $\boldsymbol{A}$, and we have similar definition for $\left|\mathcal{N}_\perp(\boldsymbol{A})\right|$.
	Now we provide the basic assumption of the main result as assumption 1.
	\begin{asm}
		It is defined that  : $\boldsymbol{X}\overset{def}{=} \oplus_{i,j}\boldsymbol{X}^{(i,j)}$,\\ \[\bar{\boldsymbol{W}} \overset{def}{=} [\boldsymbol{w}^{(1,1)^\top},  \boldsymbol{w}^{(1,2)^\top}, \cdots , \boldsymbol{w}^{(n_a,n_{u{n_a}})^\top} ]^\top\]
		Let $ 0 \leq n_{\theta} \leq d $ be the upper bound of $|\mathcal{N}_{\perp}(\boldsymbol{\theta^*})|$, $ 0\leq n_{p} \leq d$ be the upper bound of $|\mathcal{N}_{\perp}(\boldsymbol{P}^*)|$, $0 \leq n_{u,a} \leq n_{u} $
		be the upper bound of $|\mathcal{N}_{\perp}(\boldsymbol{U^{*^{^\top}}})|$. We  assume that there exists nonnegative $\kappa_\theta$, $\kappa_p$, $\kappa_{u,a}$: \\
		\[\kappa_\theta = \min\limits_{\Gamma_{\theta},\Gamma_P,\Gamma_U} \frac{\left\norm{X\Gamma_{\overline{W}} \right}}{\sqrt{n_{min}n_u}\norm{\Gamma_{\theta}^{\mathcal{N}_{\perp}(\theta)}}_2} \]
		\[\kappa_p = \min\limits_{\Gamma_{\theta},\Gamma_P,\Gamma_U} \dfrac{\left\norm{X\Gamma_{\overline{W}} \right}}{\sqrt{n_{min}n_u}\norm{\Gamma_{\boldsymbol{P}}^{\mathcal{N}_{\perp}(P)}}_F} \]
		\[\kappa_{u,a} = \min\limits_{\Gamma_{\theta},\Gamma_P,\Gamma_U} \dfrac{\left\norm{X\Gamma_{\overline{W}} \right}}{\sqrt{n_{min}n_u}\norm{(\Gamma_{U})^{\top^{\mathcal{N}_{\perp}(U^{\top})}}}_F} \]
		where $\Gamma_\theta \in \mathbb{R}^{d}$ is a function of $\boldsymbol{\theta}$; $\Gamma_P \in \mathbb{R}^{d \times n_a}$ is a function of $\boldsymbol{P}$; $\Gamma_{U} \in \mathbb{R}^{d \times n_u}$ is a function of $\boldsymbol{U}$. Define $\Gamma_{w^{(i,j)}} = \Gamma_{\theta}　+ \Gamma_{p^{(i)}} + \Gamma_{u^{(i,j)}}$, then :
		$\Gamma_{\overline{W}}= [\Gamma_{w^{(1,1)}}^\top,\Gamma_{w^{(1,2)}}^\top,\cdots,\Gamma_{w^{(n_a,n_{u_{n_a}})}}^\top]^\top$.
		Furthermore, it is assumed that the following inequalities hold :
		$\norm{\Gamma_{\boldsymbol{\theta}}^{\mathcal{N}(\theta)}}_{1} \leq \beta_\theta\norm{\Gamma_{\boldsymbol{\theta}}^{\mathcal{N}_{\perp}(\theta)}}_{1}$,
		$\norm{\Gamma_{\boldsymbol{P}}^{\mathcal{N}(P)}}_{1,2} \leq \beta_p\norm{\Gamma_{\boldsymbol{P}}^{\mathcal{N}_{\perp}(P)}}_{1,2}$,  $\norm{\Gamma_{U}^\top{^{\mathcal{N}(U^{\top})}}}_{1,2} \leq \beta_{u,a}\norm{{\Gamma_{U}^\top}^{\mathcal{N}_{\perp}(U^{\top})}}_{1,2}$.
		where $\beta_\theta$, $\beta_p$, $\beta_{u,a}$ are positive scalars.
	\end{asm}
	Note that the assumption on $\kappa_{\theta}$, $\kappa_{p}$, $\kappa_{u,a}$ is based on the restricted eigenvalue assumption \cite{assm1}, which has been widely used in existing multi-task literatures \cite{robust1,robust2}.\\
	\indent According to the notations in assumption 1, the squared error between the predicted value $\boldsymbol{x}^\top\boldsymbol{w}$ and the real value $\boldsymbol{f}$ could be formed as :
	$\norm{\boldsymbol{X}\boldsymbol{\overline{W}}- \boldsymbol{F}}$,  where $\boldsymbol{F}$ is defined as :
	\[\boldsymbol{F} = [\boldsymbol{f}^{*^{(1,1)^\top}}, \boldsymbol{f}^{*{(1,2)^\top}}, \cdots,\boldsymbol{f}^{*{(n_a,n{u_{n_a}})^\top}} ]^\top\]
	Let $\hat{\boldsymbol{W}} =(\hat{\boldsymbol{\theta}},\hat{\boldsymbol{P}},\hat{\boldsymbol{U}})$ be an optimal solution of $(P1)$. According to Eq.(\ref{A3}), we define $\boldsymbol{W}^* =(\boldsymbol{\theta}^*,\boldsymbol{P}^*, \boldsymbol{U}^*)$. Our main result could be presented as Theorem 1.
	\begin{thm}[Performance Bounds] Define $\alpha = 2\sigma\sqrt{dn_u+t}$, choose $\lambda_1$, $\lambda_2$, $\lambda_3 $ as : $\lambda_1 \ge {n_u}\alpha$,  $\lambda_2 \ge
		\tilde{n}\alpha$, $\lambda_3 \ge \alpha$
		, where $n_u = \sum\limits_{i=1}^{n_{a}}n_{u_i}$ and $\tilde{n} = \sqrt{\sum\limits_{i}n_{u_i}^2}$.
		Given Assumption 1, let
		\begin{equation*}
		\begin{split}
		\zeta = \dfrac{\lambda_1\sqrt{n_{\theta}}}{\kappa_{\theta}} + \dfrac{\lambda_2\sqrt{n_{p}}}{\kappa_{p}} + \dfrac{\lambda_2\sqrt{n_{u,a}}}{\kappa_{u,a}}
		\end{split}
		\end{equation*}
		for $t >0$, we have :
		\begin{equation}\label{thm1perf}
		\begin{split}
		\mathbb{P}\left(\frac{1}{n_{min}n_u}\norm{\boldsymbol{X}\boldsymbol{\overline{W}- \boldsymbol{F}}}^2 \leq (\dfrac{2\zeta}{n_{min}n_u})^2\right) \ge \delta(t)
		\end{split}
		\end{equation}
		\begin{equation}\label{thm1theta}
		\begin{split}
		\mathbb{P}\left(\norm{\boldsymbol{\hat{\boldsymbol{\theta}}} - \boldsymbol{\boldsymbol{\theta^*}}}_{1} \leq \dfrac{2(\beta_\theta+1)\sqrt{n_\theta}}{\kappa_\theta{n_{min}n_u}}\zeta\right) \ge
		\delta(t)
		\end{split}
		\end{equation}
		\begin{equation}\label{thm1p}
		\begin{split}
		\mathbb{P}\left(\norm{\boldsymbol{\hat{\boldsymbol{P}}} - \boldsymbol{\boldsymbol{P^*}}}_{1,2} \leq \dfrac{2(\beta_p+1)\sqrt{n_p}}{\kappa_p{n_{min}n_u}}\zeta\right) \ge \delta(t)
		\end{split}
		\end{equation}
		\begin{equation}\label{thm1u}
		\begin{split}
		\mathbb{P}\left(\norm{\boldsymbol{\hat{\boldsymbol{U}}^{\top}} - \boldsymbol{\boldsymbol{U^{*^\top}}}}_{1,2} \leq \dfrac{2(\beta_{u,a}+1)\sqrt{n_{u,a}}}{\kappa_{u,a}{n_{min}n_u}}\zeta \right) \ge \delta(t)
		\end{split}
		\end{equation}
		where $\delta(t) = 1 - \dfrac{1}{\sqrt{2\pi Z_{dn_u}(t)}}exp\left(-\dfrac{Z_{dn_u}(t)}{2}\right)$; $n_{min} = \min\limits_{i,j}n_{ij}$ and $Z_{dn_u}(t) = t-dn_ulog(1+\frac{t}{dn_u})$.
	\end{thm}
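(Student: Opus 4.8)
The plan is to follow the standard penalized‑least‑squares oracle‑inequality recipe, adapted to the three‑block structure. Work throughout in the stacked notation of Assumption~1: with $\boldsymbol{X}=\oplus_{i,j}\boldsymbol{X}^{(i,j)}$, write $\Gamma_{\overline{W}}=\hat{\boldsymbol{\overline{W}}}-\boldsymbol{\overline{W}}^*$ and its components $\Gamma_\theta=\hat{\boldsymbol{\theta}}-\boldsymbol{\theta}^*$, $\Gamma_P=\hat{\boldsymbol{P}}-\boldsymbol{P}^*$, $\Gamma_U=\hat{\boldsymbol{U}}-\boldsymbol{U}^*$. Since Proposition~1 makes $(P1)$ convex, $\hat{\boldsymbol{W}}$ is a global minimizer and $\boldsymbol{W}^*$ is feasible, so $L(\hat{\boldsymbol{W}})+\Omega(\hat{\boldsymbol{W}})\le L(\boldsymbol{W}^*)+\Omega(\boldsymbol{W}^*)$. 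Substituting the stacked model $\boldsymbol{y}=\boldsymbol{F}+\boldsymbol{\delta}$ with $\boldsymbol{F}=\boldsymbol{X}\boldsymbol{\overline{W}}^*$ and expanding the quadratic loss yields the \emph{basic inequality}
\[
\norm{\boldsymbol{X}\Gamma_{\overline{W}}}^2 \le 2\langle\boldsymbol{\delta},\boldsymbol{X}\Gamma_{\overline{W}}\rangle + \Omega(\boldsymbol{W}^*)-\Omega(\hat{\boldsymbol{W}}).
\]

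Next I would control the stochastic term. As $\boldsymbol{X}$ is block diagonal and $\Gamma_{w^{(i,j)}}=\Gamma_\theta+\Gamma_{p^{(i)}}+\Gamma_{u^{(i,j)}}$, the inner product splits into a $\boldsymbol{\theta}$‑part, a $\boldsymbol{P}$‑part and a $\boldsymbol{U}$‑part, each of the form $\langle\boldsymbol{g},\Gamma\rangle$ with $\boldsymbol{g}_\theta=\sum_{i,j}(\boldsymbol{X}^{(i,j)})^\top\boldsymbol{\delta}^{(i,j)}$, columns $\boldsymbol{g}_{p^{(i)}}=\sum_j(\boldsymbol{X}^{(i,j)})^\top\boldsymbol{\delta}^{(i,j)}$ (stacked into a $d\times n_a$ matrix $\boldsymbol{G}_P$), and $\boldsymbol{g}_{u^{(i,j)}}=(\boldsymbol{X}^{(i,j)})^\top\boldsymbol{\delta}^{(i,j)}$. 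Hölder duality ($\ell_1$–$\ell_\infty$ for $\boldsymbol{\theta}$; $\ell_{1,2}$–$\ell_{\infty,2}$ for $\boldsymbol{P}$ and $\boldsymbol{U}^\top$) bounds each part by a dual norm of $\boldsymbol{g}$ times the matching norm of $\Gamma$. The crucial observation is that all three dual norms are dominated by the single quantity $\norm{\boldsymbol{v}}$, $\boldsymbol{v}:=\boldsymbol{X}^\top\boldsymbol{\delta}$: the triangle inequality gives $\norm{\boldsymbol{g}_\theta}_\infty\le\norm{\boldsymbol{g}_\theta}_2\le n_u\norm{\boldsymbol{v}}$; grouping with Cauchy–Schwarz gives $\norm{\boldsymbol{G}_P}_{\infty,2}\le\norm{\boldsymbol{G}_P}_F\le\tilde n\norm{\boldsymbol{v}}$ with $\tilde n=\sqrt{\sum_i n_{u_i}^2}$; and $\max_{i,j}\norm{\boldsymbol{g}_{u^{(i,j)}}}\le\norm{\boldsymbol{v}}$. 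Hence on the single event $\{\norm{\boldsymbol{v}}\le\alpha/2\}$ the choices $\lambda_1\ge n_u\alpha$, $\lambda_2\ge\tilde n\alpha$, $\lambda_3\ge\alpha$ each dominate twice the corresponding dual norm, which simultaneously explains the scalings and the \emph{common} probability $\delta(t)$ in all four bounds.

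On this event I would then invoke decomposability of each penalty about its true support. Splitting indices into the zero set $\mathcal{N}$ and support $\mathcal{N}_\perp$ and using $\norm{\boldsymbol{\theta}^*}_1-\norm{\hat{\boldsymbol{\theta}}}_1\le\norm{\Gamma_\theta^{\mathcal{N}_\perp}}_1-\norm{\Gamma_\theta^{\mathcal{N}}}_1$ (with analogues for the two mixed norms), the $\mathcal{N}$‑terms cancel against the noise contribution to leave
\[
\norm{\boldsymbol{X}\Gamma_{\overline{W}}}^2 \le 2\lambda_1\norm{\Gamma_\theta^{\mathcal{N}_\perp}}_1 + 2\lambda_2\norm{\Gamma_P^{\mathcal{N}_\perp}}_{1,2} + 2\lambda_3\norm{(\Gamma_U^\top)^{\mathcal{N}_\perp}}_{1,2}.
\]
Converting each support norm to its $\ell_2$/$\ell_F$ form via Cauchy–Schwarz over the at most $n_\theta,n_p,n_{u,a}$ active groups and then bounding it by $\norm{\boldsymbol{X}\Gamma_{\overline{W}}}/(\sqrt{n_{min}n_u}\,\kappa_\bullet)$ through the restricted‑eigenvalue constants $\kappa_\theta,\kappa_p,\kappa_{u,a}$ gives $\norm{\boldsymbol{X}\Gamma_{\overline{W}}}^2\le\frac{2}{\sqrt{n_{min}n_u}}\norm{\boldsymbol{X}\Gamma_{\overline{W}}}\,\zeta$. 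Dividing once yields $\norm{\boldsymbol{X}\Gamma_{\overline{W}}}\le 2\zeta/\sqrt{n_{min}n_u}$, which is exactly \eqref{thm1perf}; feeding this back into $\norm{\Gamma_\theta}_1\le(1+\beta_\theta)\norm{\Gamma_\theta^{\mathcal{N}_\perp}}_1$ (using the cone condition of Assumption~1) and its $\boldsymbol{P}$‑ and $\boldsymbol{U}^\top$‑analogues produces \eqref{thm1theta}, \eqref{thm1p} and \eqref{thm1u}.

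The main obstacle is the concentration step that converts $\{\norm{\boldsymbol{v}}\le\alpha/2\}$ into the explicit probability $\delta(t)$. Here $\boldsymbol{v}=\boldsymbol{X}^\top\boldsymbol{\delta}\sim\mathcal{N}(\boldsymbol{0},\sigma^2\boldsymbol{X}^\top\boldsymbol{X})$ is a Gaussian quadratic form; the normalization \eqref{A1} forces unit‑norm columns in every block, so $\mathrm{tr}(\boldsymbol{X}^\top\boldsymbol{X})=d\,n_u$ and $\mathbb{E}\norm{\boldsymbol{v}}^2=\sigma^2 d n_u$, which is precisely the degrees‑of‑freedom parameter appearing in $\alpha=2\sigma\sqrt{dn_u+t}$ and in $Z_{dn_u}(t)=t-dn_u\log(1+t/dn_u)$. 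I would therefore establish $\mathbb{P}(\norm{\boldsymbol{v}}^2\le\sigma^2(dn_u+t))\ge\delta(t)$ from a sharp chi‑square upper‑tail bound with Cramér rate $Z_{dn_u}(t)$ and Mills‑ratio prefactor $1/\sqrt{2\pi Z_{dn_u}(t)}$. The delicate point is that $\boldsymbol{X}^\top\boldsymbol{X}$ is \emph{not} the identity, so $\norm{\boldsymbol{v}}^2/\sigma^2$ is a weighted sum of $\chi^2_1$ variables rather than a clean $\chi^2_{dn_u}$; one must argue, via the operator‑norm control implicit in $\sigma_1(\boldsymbol{X}^{(i,j)})$, that the weighted form is dominated by the $\chi^2_{dn_u}$ tail. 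Everything outside this concentration estimate is bookkeeping with the duality and restricted‑eigenvalue inequalities above.
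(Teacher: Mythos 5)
Your proposal is, step for step, the paper's own proof. The basic inequality from optimality of $\hat{\boldsymbol{W}}$ is the paper's Lemma 3; the H\"older/dual-norm control of the noise term, with the factors $n_u$, $\tilde{n}$, $1$ multiplying the single quantity $\norm{\boldsymbol{Z}}_F=\norm{\boldsymbol{v}}$, is exactly the bound on $\Delta L$ in that lemma; the cancellation of the off-support terms is the paper's Lemma 1 (decomposability); the restricted-eigenvalue conversion plus the cone condition yield the four bounds (\ref{thm1perf})--(\ref{thm1u}) exactly as in the paper's final argument; and the chi-square tail bound with Cram\'er rate $Z_{dn_u}(t)$ and Mills prefactor $1/\sqrt{2\pi Z_{dn_u}(t)}$ is precisely the paper's Lemma 2, proved there via the Wallace inequality followed by the Mills-ratio estimate.

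The one place you and the paper part ways is the ``delicate point'' you flag at the end, and there you are more careful than the paper, which does not resolve it: the paper observes that each $Z^2_{ij,k}/\sigma^2\sim\chi^2(1)$ (true, by the column normalization (\ref{A1})) and then writes ``hence $\norm{Z}^2_F/\sigma^2\sim\chi^2(dn_u)$.'' That ``hence'' requires the entries of each $Z_{ij}$ to be independent, i.e.\ the columns of each $\boldsymbol{X}^{(i,j)}$ to be orthogonal, which Eq.~(\ref{A1}) does not provide; so this is a genuine gap in both your argument and the paper's. Be aware, though, that the repair you sketch --- dominating the weighted form by the $\chi^2_{dn_u}$ tail ``via the operator-norm control implicit in $\sigma_1(\boldsymbol{X}^{(i,j)})$'' --- cannot work in general: if the eigenvalues $\mu_l$ of $\boldsymbol{X}^\top\boldsymbol{X}$ satisfy $\sum_l\mu_l=dn_u$ but $\max_l\mu_l>1$, the variable $\sum_l\mu_l\chi^2_1$ can have strictly heavier tails than $\chi^2_{dn_u}$ (extreme case: all mass on one eigenvalue gives a tail of order $e^{-t/(2dn_u)}$ versus $e^{-t/2}$). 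A moment-generating-function domination argument does go through whenever every $\mu_l\le 1$, but Eq.~(\ref{A1}) bounds column norms, not $\sigma_1$, so closing the gap honestly requires either an additional orthogonality or operator-norm assumption on the design, or replacing Lemma 2 by a weighted-chi-square (Hanson--Wright type) bound, which would change the constants in $\alpha$ and the probability $\delta(t)$.
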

	
	\begin{rem}
		According to theorem 1, if \[\zeta= o\Bigg(n_{min}n_u\min\left\{1,\dfrac{\kappa_{\theta}}{\sqrt{n_\theta}},\dfrac{\kappa_{p}}{\sqrt{n_p}},\dfrac{\kappa_{u,a}}{\sqrt{n_{u,a}}}\right\}\Bigg)\] We have : $\mathbb{E}(\boldsymbol{X}^{(i,j)}\boldsymbol{\hat{w}}^{(i,j)} -\boldsymbol{f}^{*^{(i,j)}}) \rightarrow 0$, $\boldsymbol{\hat{\theta}} \overset{\ell_1}{\rightarrow} \boldsymbol{\theta^*}$, $\boldsymbol{\hat{P}} \overset{\ell_{1,2}}{\rightarrow} \boldsymbol{P^*}$,
		and $\boldsymbol{\hat{U}^\top} \overset{\ell_{1,2}}{\rightarrow} \boldsymbol{U}^{*^\top}$ hold with high probability when $n_{min} \rightarrow \infty$. Furthermore, though the proof of theorem 1 uses standard techniques developed in \cite{robust2}, $\delta(t)$ in theorem 1 is a tighter probability bound than $\left(1-exp(-\dfrac{Z_{dn_u}(t)}{2})\right)$ proposed in \cite{robust2} and
		$\left(1-n_uexp(-\dfrac{Z_{dn_u}(t)}{2})\right)$ proposed in \cite{robust1}, for sufficiently large $t$.
	\end{rem}
	According to theorem 1 and remark 1, we see that our proposed method could both leverage good performance and estimate the parameters well with high probability.  
	\section{Experiment}
	Now in this section, we show our experiment results on a simulated dataset, and two real world datasets respectively.
	\subsection{Experiment Setting}
	For each subtask, we randomly split the corresponding samples into a training subset and test subset, with 40\% and 80\% of the samples selected as traning set respectively. For each involved algorithm, the hyper-parameters are tuned based on a 3 fold cross validation on the training set, and the average performance of the test set on 5 different splits are recorded. It is important to note that, different with the setting of \cite{user1}, we will not use any extra dataset for pre-training in this paper. Furthermore, the training data is preprocessed according to Eq.(\ref{A1})
	\subsection{Simulated Dataset}
	
	\subsubsection{Data Generation}
	For simulated dataset, our goal is to verify that the proposed algorithm could reach reasonable performance based on our theoretical analysis. We here define a regression problem for this dataset.
	To this end, we generate simulated features and continuous user scores (but not discrete labels) for 5 attributes, and all $n_{u_i}$s are fixed as 10.   Furthermore, we set the dimensionality $d$ as 50. For the $(i,j)$th subtask, 300 samples are generated such that $\boldsymbol{X}^{(i,j)} \sim \mathcal{N}(0,4\boldsymbol{I}) $ and $\boldsymbol{y}^{(i,j)} = \boldsymbol{X}^{(i,j)}\boldsymbol{w}^{(i,j)} + \mathcal{N}(0,\boldsymbol{I})$. To leverage group sparsity of $\boldsymbol{W}$: $\boldsymbol{\theta}$ is generated as $\boldsymbol{\theta} \sim \mathcal{N}(\boldsymbol{1},4*\boldsymbol{1})$, and the first 15 elements are set as zero; $\boldsymbol{P}$ is generated as $\boldsymbol{P} \sim \mathcal{N}(\boldsymbol{1}, 5\boldsymbol{I})$ and the 20-35 th rows of $\boldsymbol{P}$ are set as $\boldsymbol{0}$;  $\boldsymbol{U}$ is generated as $\boldsymbol{U} \sim \mathcal{N}(\boldsymbol{1}, 10\boldsymbol{I})$ and the first 2 columns of each $\boldsymbol{U}^{(t)}$ are set as $\boldsymbol{0}$.
	\subsubsection{Competitors and Evaluation Metric}
	To verify the effectiveness of our proposed algorithm, we compare our algorithm with four benchmark algorithms for regression: Support Vector Regression (SVR), Lasso regression (Lasso), Ridge regression (Ridge), and the Elastic Net. Meanwhile, all of these four benchmark algorithms are employed in a user-exclusive manner : the predictors for all the subtasks are trained separately as independent tasks. To evaluate the generalized performance of the algorithms, for a given attribute, we adopt $\overline{NMSE}$, the average value of normalized mean square error (NMSE) on all users for a given attribute, as the evaluation metric.
	
	\begin{table}[h]
		\centering
		\caption{Performance comparison for the simulated dataset with 40\% samples selected  as training data}
		\begin{tabular}{ccccc}
			\toprule
			SVR   & Ridge & Lasso & Elastic Net & ours \\
			\midrule
			1.000  & 0.830  & 2.99E-05 & 9.96E-05 & \textbf{2.70E-05} \\
			1.000  & 0.830  & 3.20E-05 & 9.92E-05 & \textbf{2.82E-05} \\
			1.000  & 0.830  & 3.15E-05 & 1.07E-04 & \textbf{2.78E-05} \\
			0.998  & 0.829  & 4.20E-05 & 1.14E-04 & \textbf{3.75E-05} \\
			1.005  & 0.834  & 2.88E-05 & 9.34E-05 & \textbf{2.72E-05} \\
			\midrule
			1.001  & 0.830  & 3.28E-05 & 1.03E-04 & \textbf{2.95E-05} \\
			\bottomrule
		\end{tabular}%
	\end{table}%
	
	\begin{table}[htbp]
		
		\centering
		\caption{Performance comparison for the simulated dataset with 80\% samples selected  as training data}
		\begin{tabular}{rrrrr}
			\toprule
			\multicolumn{1}{c}{SVR} & \multicolumn{1}{c}{Ridge} & \multicolumn{1}{c}{Lasso} & \multicolumn{1}{c}{Elastic Net} & \multicolumn{1}{c}{ours} \\
			\midrule
			1.005  & 0.884  & 2.27E-05 & 5.19E-05 & \textbf{2.11E-05} \\
			1.009  & 0.888  & 2.37E-05 & 5.27E-05 & \textbf{2.17E-05} \\
			1.009  & 0.887  & 2.30E-05 & 5.14E-05 & \textbf{2.12E-05} \\
			1.010  & 0.889  & 2.79E-05 & 5.64E-05 & \textbf{2.60E-05} \\
			1.019  & 0.896  & 2.03E-05 & 4.79E-05 & \textbf{1.89E-05} \\
			\midrule
			1.010  & 0.889  & 2.35E-05 & 5.20E-05 & \textbf{2.18E-05} \\
			\bottomrule
		\end{tabular}%
	\end{table}%
	\subsubsection{Performance Comparison}
	According to Table 1 and Table 2, we could draw the following conclusions. The six rows in these two tables records the performance for attributes 1-5 and their average, respectively.  On one hand, due to the inability to leverage sparse parameters, we see that performance of SVR and Ridge couldn't outperform the other three algorithms. On the other hand, our proposed algorithm reaches the best performance based on $\overline{NMSE}$, which verifies the effectiveness of our proposed algorithm. T
	\subsubsection{Parameters Recovery}
	Now we show the ability of our algorithm to recover the structured parameters. With the same simulated dataset, we select 80\% the samples as training data. According to  theorem 1, we set $t=10$, $\lambda_1 = 2n_u\alpha $, $\lambda_2 = 2.5 \tilde{n}\alpha$, $\lambda_3= 32\alpha$. Figure 2 shows the resulting parameters of our algorithm. Note that we extend $\theta$ and $P$ to $\mathbb{R}^{d \times n_u}$, so that they could match the size of
	$U$. As is shown in this figure, we conclude that all of these three sets of parameters could roughly reach their expected structure.
	\begin{figure}[ht]\label{weights}
		\centering
		\subfigure[$\theta$]{
			\includegraphics[scale=0.2]{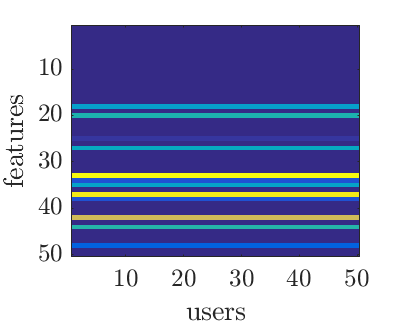}
		}
		\subfigure[$P$]{
			\includegraphics[scale=0.2]{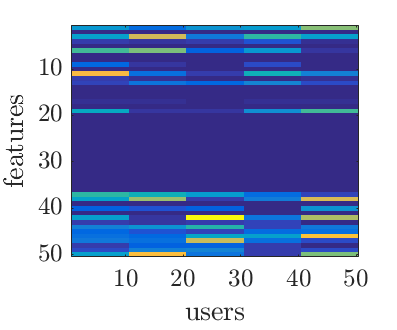}
		}
		\subfigure[$U$]{
			\includegraphics[scale= 0.2]{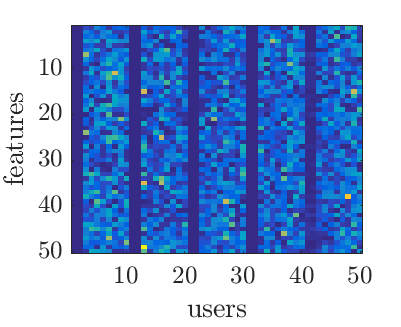}
		}
		\caption{Figures for the magnitude of $\theta$,$P$ and $U$. For comparison convenience, both $\theta$ and $P$ are extended to $\mathbb{R}^{d \times n_u}$ matrix}
	\end{figure}
	\begin{figure}[!htb]\label{opt}
		\centering
		\subfigure[40\%]{
			\includegraphics[scale=0.38]{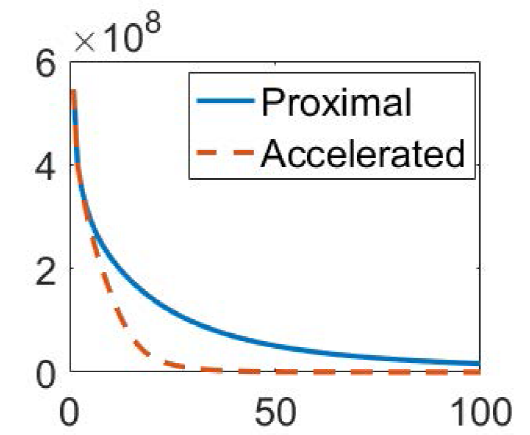}
		}
		\subfigure[80\%]{
			\includegraphics[scale=0.38]{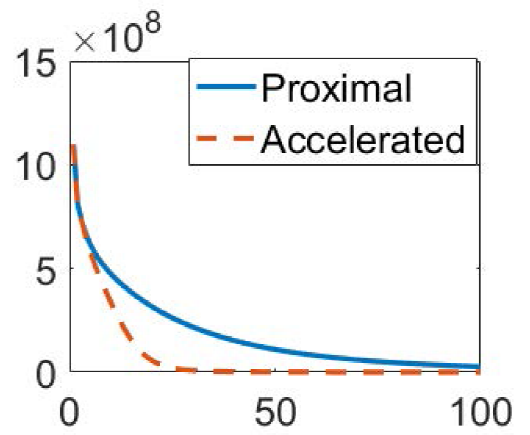}
		}
		\caption{Effect of nestrov's Acceleration with (a) 40\% random selected samples as training data and (b) 80\% samples as training data. The y axis represents the average loss function of $(P1)$ on 10 repetitions, and the x axis represents the iteration number.   }
	\end{figure}
	\begin{table}[ht]\label{resbin80}
		\centering
		\caption{Performance comparison for the binary attribute dataset}
		\begin{tabular}{lcc}
			\bottomrule
			\multicolumn{1}{l}{\multirow{2}[3]{*}{algorithm}} & \multicolumn{2}{c}{accuracy} \\
			\cmidrule{2-3}               & 40\% & 80\% \\
			\midrule
			SVM  & 0.6278  & 0.6756  \\
			MLP  & 0.5057  & 0.4997  \\
			\midrule
			user exclusive & 0.6549  & 0.6913  \\
			user adaptive & 0.6771  & 0.6956  \\
			\midrule
			rMTFL-G & 0.6421  & 0.6941  \\
			rMTFL-U & 0.6659  & 0.7056  \\
			\midrule
			ours & \textbf{0.6894} & \textbf{0.7121} \\
			\bottomrule
		\end{tabular}%
	\end{table}%
	\subsubsection{Effect of Nestrov's Acceleration Strategy}
	Next, we conduct empirical study the on the effect of Nestrov's Acceleration. To do this, we randomly select 40\% and 80\%
	samples as training data respectively. For each ratio, we run our algorithm 10 times for 100 iterations with different initial parameters. The average loss function per iteration both with and without the nestorv's acceleration strategy are presented in Figure 3. According to this figure, we conclude that, for both ratios, the accelerated algorithm starts to converge before the 50th iteration, which is much earlier than that of the ordinary proximal algorithm. We thus draw the conclusion that this acceleration strategy could successfully leverage faster convergence of our algorithm.
	
	
	\begin{table}[!hb]
		\centering
		\caption{Performance comparison for the relative attribute dataset}
		\begin{tabular}{lcc}
			\toprule
			\multicolumn{1}{l}{\multirow{2}[4]{*}{algorithm}} & \multicolumn{2}{c}{accuracy} \\
			\cmidrule{2-3}            & 40\% & 80\% \\
			\midrule
			rel\_attr & 0.4797  & 0.5195  \\
			RankNet & 0.4791  & 0.4721  \\
			RankBoost & 0.4669  & 0.5251  \\
			\midrule
			user exclusive & 0.4753  & 0.5303  \\
			user adaptive & 0.4777  & 0.5336  \\
			\midrule
			rMTFL-G & 0.4807  & 0.5074  \\
			rMTFL-U & 0.4838  & 0.5433  \\
			\midrule
			ours & \textbf{0.5119} & \textbf{0.5546} \\
			\bottomrule
		\end{tabular}%
		\label{resrel40}
	\end{table}%
	
	\subsection{Shoes Dataset with Binary Attributes}
	\subsubsection{Dataset Description}
	For attribute learning, we use the shoes Dataset \cite{user1,whittle} which contains 14,658 online shopping images. Here we choose 6 user labeled attributes from the original dataset: bright, ornate, shiny, high, long, formal. For each of the attribute, user specific binary labels were collected on 60 images from 10 workers \cite{user1}. In other words, we have $n_a =6$, $n_{u_i}=10$ and $n_{ij} = 60$. To form the feature of the images, we concatenate the GIST and color histograms provided by the original dataset.
	\subsubsection{Competitors and Evaluation Metric}
	To show the effectiveness of our proposed algorithm on binary personal attribute learning, we compare our algorithm with three kinds of algorithms: global algorithms, user-specific algorithms and multi-task algorithms. For global algorithms, during the training phase, user specific labels are first processed to global labels via majority voting. For valid set and test set the user specific labels are used directly for performance evaluation. This kind of algorithms include: the Support Vector Machine(SVM), and Multi Layer Perceptron with single hidden layer (MLP).  For user specific algorithms, we employ the user exclusive method where one SVM classifier is trained for each user independently and the user adaptive method proposed in \cite{user1}. For multi-task algorithms, we employ rMTFL \cite{robust2}, which shares similar model assumption as the proposed model, as the benchmark, and both the global version (rMTFL-G) and the user specific version (rMTFL-U) are considered. The setting of  rMTFL-G is the same as that of the other global algorithms, except that the global classifiers for different attributes are trained in a multi-task manner. While for rMTFL-U, we regard all the user-specific classifiers as subtasks of rMTFL.  To evaluate the generalized performance of different algorithms, the average value of the classification accuracy among all users is adopted as the performance metric.
	\subsubsection{Results}
	Table 3 shows the average performance on 5 splits for all these algorithms when 40\% and 80\% of the samples are chosen as training data respectively. We could observe that our proposed algorithm reaches the best average accuracy.
	\subsection{Shoes Dataset with Relative Attributes}
	\subsubsection{Dataset Description} Here we use the same shoes dataset as the binary attribute experiment. The only difference is that the user specific labels are collected based on  relative attribute between a pair of shoes images. For this task, we choose 6 attributes : pointy, bright, ornate, shiny, sporty and feminine. Similar as the previous subsection, we have $n_a =6$, $n_{u_i}=10$, $n_{ij}=60$.
	\subsubsection{Competitors and Evaluation Metric}
	For relative attribute learning, we also adopt the aforementioned three kinds of algorithms as benchmarks. The global models include: ranksvm models for relative attribute learning \cite{rel} (rel\_attr), ranknet \cite{ranknet}, and rankboost \cite{rankboost}. For the user exclusive model we train one rel\_attr model for each user independently. And user adaptive model is the same as \cite{user1} except that we do not use any extra data for pre-training. rMTFL-G and rMTFL-U
	are the same as that used in the previous subsection except that the input feature for an image pair is processed as what mentioned in the ``Model Formulation'' section to fit these algorithms to ranking problems. We use the average ranking accuracy among all users for all tasks as our evaluation metric. 
	\subsubsection{Results}
	Table 4  show the performance comparison when 40\% and 80\% of the samples are chosen as training data respectively. 
	It is concluded that our proposed algorithm could reach reasonable improvements with respect to the benchmarks,  which demonstrate its effectiveness.  
	\section{Conclusion}
	In this paper, we propose a hierarchical multi-task model for user specific attribute learning across multiple attributes with a common-to-special decomposition of the model weights. Specifically, our model weights include a common cognition factor, an attribute-specific factor and a user specific factor. The well-known accelerated proximal gradient method is employed to solve this model. Based on assumption 1, we prove theoretically that the proposed algorithm
	could both leverage good performance and estimate the true parameters well with high probability. The experiment results further verify the effectiveness of our proposed model.
	\section{Acknowledgments}
	The research of Zhiyong Yang and Qingming Huang was supported in part by National Natural Science Foundation of China: 61332016, U1636214, 61650202 and 61620106009, in part by Key Research Program of Frontier Sciences, CAS: QYZDJ-SSW-SYS013. The research of Qianqian Xu was supported in part by National Natural Science Foundation of China (No. 61672514, 61390514, 61572042), CCF-Tencent Open Research Fun. The research of Xiaochun Cao was Supported by National Key Research and Development Plan (No.2016YFB0800603), National Natural Science Foundation of China (No.U1636214, 61422213).
	\bibliography{aaai}
	\bibliographystyle{aaai}
\section{Appendix}
\subsection{Proof of Propositions}
\begin{Pf1}
	The proof directly follows the operations that preserves the convexity of a function \cite{cvx} \qed
\end{Pf1}

\begin{Pf2}
	
	Denote $\Delta_{ij}$ as :
	\begin{equation}
	\begin{split}
	\Delta_{ij} = 2(\boldsymbol{X}^{(i,j)\top}\boldsymbol{X}^{(i,j)}\boldsymbol{w}^{(i,j)} - \boldsymbol{X}^{(i,j)\top}\boldsymbol{y}^{(i,j)}
	)\end{split}
	\end{equation}
	We could reach :
	\begin{align*}
	&\nabla_{\boldsymbol{\theta}}L(\boldsymbol{W}) =
	\sum\limits_{i=1}^{n_a}\sum\limits_{j=1}^{n_{u_i}}\Delta_{i,j}\\
	&\nabla_{\boldsymbol{p}^{(i)}}L(\boldsymbol{W}) =
	\sum\limits_{j=1}^{n_{u_i}}\Delta_{i,j}\\
	&\nabla_{\boldsymbol{u}^{(i,j)}}L(\boldsymbol{W}) =
	\Delta_{i,j}
	\end{align*}
	
	We further denote 
	\begin{equation*}
	\begin{split}
	&dL_{i,j} = 2\boldsymbol{X}^{(i,j)\top}\boldsymbol{X}^{(i,j)}(\boldsymbol{w}^{(i,j)} -\boldsymbol{w'}^{(i,j)})\\
	&d\boldsymbol{\theta} = \nabla_{\boldsymbol{\theta}}L(\boldsymbol{W}) -\nabla_{\boldsymbol{\theta'}}L(\boldsymbol{W'})\\ &d\boldsymbol{p}^{(i)} = \nabla_{\boldsymbol{p}^{(i)}}L(\boldsymbol{W}) -\nabla_{\boldsymbol{p'}^{(i)}}L(\boldsymbol{W'})\\ &d\boldsymbol{u}^{(i,j)} = \nabla_{\boldsymbol{u}^{(i,j)}}L(\boldsymbol{W}) -\nabla_{\boldsymbol{u'}^{(i,j)}}L(\boldsymbol{W'})
	\end{split}
	\end{equation*}
	
	It thus follows that 
	\begin{equation}
	\begin{split}
	&\norm{\nabla{L(\tilde{W})} - \nabla{L(\tilde{W}')}} \\
	&=\sqrt{\norm{d\boldsymbol{\theta}}^2+\sum\limits_{i}\norm{d\boldsymbol{p}^{(i)}}^2+\sum\limits_{i}\sum\limits_{j}\norm{d\boldsymbol{u}^{(i,j)}}^2}\\
	&\overset{(\uppercase\expandafter{\romannumeral1})}{\leq}\norm{d\boldsymbol{\theta}}+\sum\limits_{i}\norm{d\boldsymbol{p}^{(i)}}+\sum\limits_{i}\sum\limits_{j}\norm{d\boldsymbol{u}^{(i,j)}}\\
	&= \norm{\sum\limits_{i=1}^{n_a}\sum\limits_{j=1}^{n_{u_i}}dL_{i,j}}+\sum\limits_{i=1}^{n_a}(\norm{\sum\limits_{j=1}^{n_{u_i}}dL_{i,j}}) + \sum\limits_{i=1}^{n_a}\sum\limits_{j=1}^{n_{u_i}}\norm{dL_{ij}}
	\\
	&\leq 6C\sum\limits_{i=1}^{n_a}\sum\limits_{j=1}^{n_{u_i}}\norm{(\boldsymbol{w}^{(i,j)} -\boldsymbol{w'}^{(i,j)})}  \\
	&\leq 6C\Big(\sum\limits_{i}n_{u_i}\norm{\boldsymbol{p}^{(i)} -\boldsymbol{p'}^{(i)}} + n_u\norm{\boldsymbol{\theta} -\boldsymbol{\theta'}} \\
	&+ \sum\limits_{i,j}\norm{\boldsymbol{u}^{(i,j)} -\boldsymbol{u'}^{(i,j)}}\Big)\\
	&\overset{(\uppercase\expandafter{\romannumeral2})}{\leq}
	6Cn_u\sqrt{n_a+1+n_u}\norm{\tilde{W} - \tilde{W}' }
	\end{split}
	\end{equation}
	where $C = \max\limits_{i,j}[\sigma_1(\boldsymbol{X}^{(i,j)})]^2$; (\uppercase\expandafter{\romannumeral1}) follow that $\sqrt{\sum\limits_{i}\boldsymbol{a}_i} \leq \sum\limits_{i} \sqrt{\boldsymbol{a}_i}, \ \forall\boldsymbol{a} \in \mathbb{R}^n$; (\uppercase\expandafter{\romannumeral2}) is due to the fact that $\norm{\boldsymbol{a}}_1 \leq \sqrt{n}\norm{\boldsymbol{a}}_2, \ \forall \boldsymbol{a} \in \mathbb{R}^n $
	\qed
\end{Pf2}

\subsection{Proof of the main result}

We first proof three fundamental lemmas which are the precursors of our main result.
\begin{lem}
	$\forall \boldsymbol{A},\boldsymbol{\hat{A}} \in \mathbb{R}^{d \times m}$, $m \ge 1$, we have :
	\begin{equation*}
	\begin{split}
	\norm{\boldsymbol{\hat{A}} -\boldsymbol{A}}_\ell + \norm{\boldsymbol{{A}}}_\ell - \norm{\boldsymbol{
			\hat{A}}}_\ell \leq 2 \norm{(\boldsymbol{\hat{A}} -\boldsymbol{A})^{\mathcal{N}_{\perp}(A)}}_\ell
	\end{split}
	\end{equation*}
	holds for all norm $\norm{\cdot}_\ell$ such that $\forall$
	$\mathcal{C} \subset [d]$ and $\mathcal{C}_\perp = [d]\backslash \mathcal{C}$ :\\
	\begin{equation}\label{assuml1}
	\begin{split}
	\norm{\boldsymbol{A}}_\ell = \norm{\boldsymbol{A}^{\mathcal{C}}}_\ell +  \norm{\boldsymbol{A}^{\mathcal{C}_\perp}}_\ell
	\end{split}
	\end{equation}
\end{lem}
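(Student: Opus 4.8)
The plan is to reduce the inequality to a single application of the decomposition property \eqref{assuml1} of $\norm{\cdot}_\ell$ together with the reverse triangle inequality, after exploiting the fact that $\boldsymbol{A}$ vanishes on the index set $\mathcal{N}(\boldsymbol{A})$. Throughout I would abbreviate $\boldsymbol{\Delta} = \hat{\boldsymbol{A}} - \boldsymbol{A}$, $\mathcal{N} = \mathcal{N}(\boldsymbol{A})$, and $\mathcal{N}_{\perp} = \mathcal{N}_{\perp}(\boldsymbol{A})$, so that $\{\mathcal{N},\mathcal{N}_{\perp}\}$ is a partition of $[d]$ and, by the very definition of $\mathcal{N}$, the zero-row block satisfies $\boldsymbol{A}^{\mathcal{N}} = \boldsymbol{0}$.

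First I would split each norm on the left-hand side using \eqref{assuml1} with $\mathcal{C} = \mathcal{N}$. This gives $\norm{\boldsymbol{\Delta}}_\ell = \norm{\boldsymbol{\Delta}^{\mathcal{N}}}_\ell + \norm{\boldsymbol{\Delta}^{\mathcal{N}_{\perp}}}_\ell$ and $\norm{\hat{\boldsymbol{A}}}_\ell = \norm{\hat{\boldsymbol{A}}^{\mathcal{N}}}_\ell + \norm{\hat{\boldsymbol{A}}^{\mathcal{N}_{\perp}}}_\ell$, while the same property collapses $\norm{\boldsymbol{A}}_\ell$ to $\norm{\boldsymbol{A}^{\mathcal{N}_{\perp}}}_\ell$ because its $\mathcal{N}$-block is null.

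The key observation is that on the zero block the estimate and the perturbation coincide: since $\boldsymbol{A}^{\mathcal{N}} = \boldsymbol{0}$, we have $\hat{\boldsymbol{A}}^{\mathcal{N}} = \boldsymbol{\Delta}^{\mathcal{N}}$ and hence $\norm{\hat{\boldsymbol{A}}^{\mathcal{N}}}_\ell = \norm{\boldsymbol{\Delta}^{\mathcal{N}}}_\ell$. On the complementary block I would write $\hat{\boldsymbol{A}}^{\mathcal{N}_{\perp}} = \boldsymbol{A}^{\mathcal{N}_{\perp}} + \boldsymbol{\Delta}^{\mathcal{N}_{\perp}}$ and apply the reverse triangle inequality to obtain the lower bound $\norm{\hat{\boldsymbol{A}}^{\mathcal{N}_{\perp}}}_\ell \ge \norm{\boldsymbol{A}^{\mathcal{N}_{\perp}}}_\ell - \norm{\boldsymbol{\Delta}^{\mathcal{N}_{\perp}}}_\ell$. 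Substituting both facts into the decomposition of $\norm{\hat{\boldsymbol{A}}}_\ell$ yields $\norm{\hat{\boldsymbol{A}}}_\ell \ge \norm{\boldsymbol{\Delta}^{\mathcal{N}}}_\ell + \norm{\boldsymbol{A}^{\mathcal{N}_{\perp}}}_\ell - \norm{\boldsymbol{\Delta}^{\mathcal{N}_{\perp}}}_\ell$.

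Finally I would assemble the left-hand side. Combining the split of $\norm{\boldsymbol{\Delta}}_\ell$, the collapsed $\norm{\boldsymbol{A}}_\ell = \norm{\boldsymbol{A}^{\mathcal{N}_{\perp}}}_\ell$, and the lower bound on $\norm{\hat{\boldsymbol{A}}}_\ell$ just derived, the two terms $\norm{\boldsymbol{\Delta}^{\mathcal{N}}}_\ell$ and $\norm{\boldsymbol{A}^{\mathcal{N}_{\perp}}}_\ell$ cancel, leaving precisely $2\norm{\boldsymbol{\Delta}^{\mathcal{N}_{\perp}}}_\ell = 2\norm{(\hat{\boldsymbol{A}} - \boldsymbol{A})^{\mathcal{N}_{\perp}(A)}}_\ell$, which is the claim. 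There is no genuine obstacle in this argument, as it is purely structural and uses no probabilistic input; the only points requiring care are invoking \eqref{assuml1} with the correct index set and orienting the reverse triangle inequality so that $\norm{\boldsymbol{A}^{\mathcal{N}_{\perp}}}_\ell$ enters with the sign that permits cancellation. Its role, as I read it, is to later discard the off-support portion of the estimation error so that the restricted-eigenvalue constants $\kappa_\theta$, $\kappa_p$, $\kappa_{u,a}$ of Assumption 1 need only be applied to the active rows; the inequality will be instantiated separately with the $\ell_1$ norm for $\boldsymbol{\theta}$ and the $\ell_{1,2}$ norm for $\boldsymbol{P}$ and $\boldsymbol{U}^\top$, each of which manifestly satisfies the row-wise additivity \eqref{assuml1}.
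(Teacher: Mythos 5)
Your proposal is correct and follows essentially the same route as the paper's proof: both rest on the additivity property \eqref{assuml1} applied with $\mathcal{C}=\mathcal{N}(\boldsymbol{A})$, the observation that $\boldsymbol{A}$ vanishes on $\mathcal{N}(\boldsymbol{A})$ (so $\hat{\boldsymbol{A}}$ and $\hat{\boldsymbol{A}}-\boldsymbol{A}$ agree there), and the reverse triangle inequality on the $\mathcal{N}_{\perp}(\boldsymbol{A})$ block. The only difference is organizational: the paper first isolates the intermediate inequality for the zero block and then assembles, while you decompose all three norms at once and cancel, which is the same argument presented in a single pass.
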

\begin{proof}
	Since $\boldsymbol{A}^{\mathcal{N}(\boldsymbol{A})} = 0 $, we have :
	$\norm{(\boldsymbol{\hat{A}} -\boldsymbol{A})^{\mathcal{N}(A)}}_\ell = 
	\norm{(\boldsymbol{\hat{A}})^{\mathcal{N}(A)}}_\ell$\\
	Hence : \\
	\begin{equation}\label{pl1}
	\begin{split}
	&\norm{(\boldsymbol{\hat{A}} -\boldsymbol{A})^{\mathcal{N}(A)}}_\ell + \norm{\boldsymbol{A}}_\ell  - \norm{\boldsymbol{\hat{A}}}_\ell \\
	&\overset{(*)}{=} \norm{\boldsymbol{A}}_\ell  - \norm{\boldsymbol{\hat{A}}^{\mathcal{N}_{\perp}(\boldsymbol{A})}}_\ell \overset{(**)}{\leq} \norm{(\boldsymbol{\hat{A}}-\boldsymbol{A})^{\mathcal{N}_\perp(\boldsymbol{A})}}_\ell
	\end{split}
	\end{equation}
	where $(*)$ follows Eq.(\ref{assuml1}) and $(**)$ follows
	the fact that $\left|\norm{\boldsymbol{A}}-\norm{\boldsymbol{B}}\right| \leq \norm{\boldsymbol{A} -\boldsymbol{B}} $.\\
	With Eq.(\ref{assuml1}) and Eq.(\ref{pl1}),  the correctness of lemma 1 is straightforward
\end{proof}
\begin{lem}
	Let $\chi^2(d)$ be a chi-square random variable with degree of freedom $d$, then the following inequality holds : \\
	$\mathbb{P}\left(\chi^2(d) \ge d+t \right) \le \dfrac{1}{\sqrt{2\pi Z_d(t)}}exp\left(-\dfrac{Z_d(t)}{2}\right) $,$\forall t > 0$ \\
	where $Z_d(t) = t- dlog(1+\dfrac{t}{d}) $, 
\end{lem}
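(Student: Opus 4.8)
The plan is to reduce the chi-square tail to the upper incomplete gamma function and then obtain the polynomial prefactor $1/\sqrt{2\pi Z_d(t)}$ through a sharper-than-Markov integral estimate. First I would substitute $x = 2u$ to write $\mathbb{P}(\chi^2(d) \ge d+t) = \frac{1}{\Gamma(d/2)}\int_{(d+t)/2}^{\infty} u^{d/2-1}e^{-u}\,du$, so that with $a = d/2$ and $y = (d+t)/2$ the problem becomes bounding the regularized upper incomplete gamma $\Gamma(a,y)/\Gamma(a)$. A bare Chernoff argument already yields the factor $e^{-Z_d(t)/2}$ (this is exactly the bound attributed to the prior work in Remark 1); the entire content of the lemma is the additional decay $1/\sqrt{2\pi Z_d(t)}$, so the estimate must be strictly tighter than an exponential Markov bound.

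The key step is an antiderivative trick analogous to the Mills-ratio bound for the Gaussian tail. Differentiating gives $\frac{d}{du}\big(-u^{a-1}e^{-u}\big) = \big(u-(a-1)\big)u^{a-2}e^{-u}$, so integrating from $y$ to $\infty$ yields $\int_y^\infty \big(u-(a-1)\big)u^{a-2}e^{-u}\,du = y^{a-1}e^{-y}$. Since $u/(u-(a-1))$ is decreasing in $u$ for $a \ge 1$, on the range $u \ge y$ I can bound $u^{a-1}e^{-u} \le \frac{y}{y-(a-1)}\big(u-(a-1)\big)u^{a-2}e^{-u}$ and integrate to obtain $\int_y^\infty u^{a-1}e^{-u}\,du \le \frac{y^a e^{-y}}{y-a+1}$. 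With $a = d/2$ and $y=(d+t)/2$ the denominator is exactly $y-a+1 = t/2+1$.

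Finally I would invoke the Stirling lower bound $\Gamma(d/2) \ge \sqrt{2\pi}\,(d/2)^{(d-1)/2}e^{-d/2}$ and simplify; after cancellation the whole expression collapses to $\frac{\sqrt{d/2}}{\sqrt{2\pi}\,(t/2+1)}\,e^{-Z_d(t)/2}$, using the identity $(1+t/d)^{d/2}e^{-t/2} = e^{-Z_d(t)/2}$. Comparing with the target, it then suffices to prove the purely elementary inequality $\frac{d}{2}Z_d(t) \le (t/2+1)^2$. I expect this final reduction to be the main obstacle, since it must hold uniformly in $t>0$; the clean way to dispatch it is the bound $Z_d(t) \le t^2/(2d)$, which follows from $\log(1+s) \ge s - s^2/2$ for $s \ge 0$, giving $\frac{d}{2}Z_d(t) \le t^2/4 \le (t/2+1)^2$. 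One caveat worth flagging is that the monotonicity step requires $a = d/2 \ge 1$, i.e. $d \ge 2$, which is harmless here since the lemma is ultimately applied with $d$ replaced by the large quantity $dn_u$.
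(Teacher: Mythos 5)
Your proof is correct, but it takes a genuinely different route from the paper's. The paper's own proof is essentially two lines: it invokes the Wallace inequality to dominate the chi-square tail by a standard normal tail at the matched point, $\mathbb{P}\left(\chi^2(d) \ge d+t\right) \le \mathbb{P}\left(\mathcal{N}_{0,1} \ge \sqrt{Z_d(t)}\right)$, and then applies the classical Gaussian Mills-ratio bound $\mathbb{P}\left(\mathcal{N}_{0,1} \ge u\right) \le \frac{1}{\sqrt{2\pi}\,u}\exp\left(-\frac{u^2}{2}\right)$ with $u=\sqrt{Z_d(t)}$. You instead work directly with the upper incomplete gamma function: your antiderivative identity is precisely the gamma-integrand analogue of that same Mills-ratio idea, and combined with Stirling's lower bound on $\Gamma(d/2)$ it yields the prefactor $\sqrt{d/2}/(t/2+1)$ in place of $1/\sqrt{Z_d(t)}$, which you then dominate via the elementary inequality $\log(1+s) \ge s - s^2/2$, i.e.\ $\frac{d}{2}Z_d(t) \le \frac{t^2}{4} \le (t/2+1)^2$. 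All the steps check out: the substitution to $\Gamma(a,y)/\Gamma(a)$, the monotonicity of $u/(u-(a-1))$ for $a\ge 1$ and $u \ge y > a-1$, the Stirling bound $\Gamma(x)\ge\sqrt{2\pi}\,x^{x-1/2}e^{-x}$, and the final cancellation producing $e^{-Z_d(t)/2}$. What your route buys is self-containedness -- no appeal to Wallace's normal-domination theorem, only calculus and Stirling -- and in fact an intermediate bound that is slightly sharper than the lemma's statement. What it costs is the restriction $d\ge 2$ needed for the monotonicity step, which you correctly flag as harmless here since the lemma is applied with $dn_u$ degrees of freedom; the paper's route carries no such restriction.
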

\begin{proof}
	According to the Wallace inequality \cite{Wallace}, we have :
	\begin{equation}\label{res1l2}
	\begin{split}
	\mathbb{P}\left(\chi^2(d) \ge d+t \right) \le 
	\mathbb{P}\left(\mathcal{N}_{0,1} \ge \sqrt{Z_d(t)}\right)
	\end{split}
	\end{equation}
	where $\mathcal{N}_{0,1}$ is a random variable subject to $\mathcal{N}(0,1)$.\\
	Moreover, let $u = \sqrt{Z_d(t)}$ :\\
	\begin{equation}\label{res2l2}
	\begin{split}
	\mathbb{P}\left(\mathcal{N}_{0,1} \ge u\right)& =
	\int\limits_{u}^{+\infty}\dfrac{1}{\sqrt{2\pi}}exp\left(-\dfrac{x^2}{2}\right)dx  \\
	&\leq \int\limits_{u}^{+\infty}\dfrac{x}{u}\dfrac{1}{\sqrt{2\pi}}exp\left(-\dfrac{x^2}{2}\right)dx\\ & = \dfrac{1}{\sqrt{2\pi }u}exp\left(-\dfrac{u^2}{2}\right)
	\end{split}
	\end{equation}
	It is obvious that lemma 2 directly follows (\ref{res1l2}) and (\ref{res2l2}).
\end{proof}
\begin{lem} 
	Let $\alpha = 2\sigma\sqrt{dn_u+t}$, choose $\lambda_1$, $\lambda_2$, $\lambda_3 $ as : $\lambda_1 \ge {n_u}\alpha$,  $\lambda_2 \ge
	\tilde{n}\alpha$, $\lambda_3 \ge \alpha$
	, where $n_u = \sum\limits_{i=1}^{n_{a}}n_{u_i}$ and $\tilde{n} = \sqrt{\sum\limits_{i}n_{u_i}^2}$.
	let $\hat{\boldsymbol{W}} =(\hat{\boldsymbol{\theta}},\hat{\boldsymbol{P}},\hat{\boldsymbol{U}})$ be an optimal solution of $(P1)$, and $\boldsymbol{W} =({\boldsymbol{\theta}},{\boldsymbol{P}},{\boldsymbol{U}})$ be an arbitrary feasible solution such that $\boldsymbol{W} \neq \boldsymbol{
		\hat{W}}$, then 
	\begin{equation}\label{lem3}
	\begin{split}
	&\sum\limits_{i,j}\left\norm{\boldsymbol{x}^{(i,j)}\hat{\boldsymbol{w}}^{(i,j)} - \boldsymbol{f}^{*^{(i,j)}}\right}^2 \leq \sum\limits_{i,j}\left\norm{\boldsymbol{x}^{(i,j)}\boldsymbol{w}^{(i,j)} - \boldsymbol{f}^{*^{(i,j)}}\right}^2 \\ +
	&2\lambda_1 \left\norm{(\hat{\boldsymbol{\theta}}-\boldsymbol{\theta})^{\mathcal{N}_\perp(\boldsymbol{\theta})}\right}_1 +
	2\lambda_2\left\norm{(\hat{\boldsymbol{P}}-\boldsymbol{P})^{\mathcal{N}_\perp(\boldsymbol{P})}\right}_{1,2} \\
	+ &2\lambda_3\left\norm{(\hat{\boldsymbol{U}}^{\top}-\boldsymbol{U}^{\top})^{\mathcal{N}_\perp(\boldsymbol{U}^{\top})}\right}_{1,2} 
	\end{split}
	\end{equation}
	holds with probability at least:\\
	$1 - \dfrac{1}{\sqrt{2\pi Z_{dn_u}(t)}}exp\left(-\dfrac{Z_{dn_u}(t)}{2}\right) $, $\forall t > 0$
	where $n_u=\sum\limits_{i=1}^{n_a} n_{u_i}$
\end{lem}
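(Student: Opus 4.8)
The plan is to follow the standard ``basic inequality'' route for Lasso / group-Lasso oracle bounds, specialising each step to the three-component decomposition and closing with Lemma 1 and Lemma 2. First I would exploit optimality: since $\hat{\boldsymbol{W}}$ minimises $L+\Omega$ and $\boldsymbol{W}$ is feasible, $L(\hat{\boldsymbol{W}})+\Omega(\hat{\boldsymbol{W}})\le L(\boldsymbol{W})+\Omega(\boldsymbol{W})$. Substituting the noise model $\boldsymbol{y}^{(i,j)}=\boldsymbol{f}^{*^{(i,j)}}+\boldsymbol{\delta}^{(i,j)}$ from Eq.(\ref{A3}) into every squared residual, expanding, and cancelling the common $\norm{\boldsymbol{\delta}^{(i,j)}}^2$ terms leaves $\sum_{i,j}\norm{\boldsymbol{X}^{(i,j)}\hat{\boldsymbol{w}}^{(i,j)}-\boldsymbol{f}^{*^{(i,j)}}}^2 \le \sum_{i,j}\norm{\boldsymbol{X}^{(i,j)}\boldsymbol{w}^{(i,j)}-\boldsymbol{f}^{*^{(i,j)}}}^2 + 2\sum_{i,j}\langle \boldsymbol{X}^{(i,j)}(\hat{\boldsymbol{w}}^{(i,j)}-\boldsymbol{w}^{(i,j)}),\boldsymbol{\delta}^{(i,j)}\rangle + \Omega(\boldsymbol{W})-\Omega(\hat{\boldsymbol{W}})$. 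The cross (empirical-process) term is the only stochastic object, so the whole proof reduces to controlling it.

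Next I would push the noise inner product through the additive structure $\hat{\boldsymbol{w}}^{(i,j)}-\boldsymbol{w}^{(i,j)}=(\hat{\boldsymbol{\theta}}-\boldsymbol{\theta})+(\hat{\boldsymbol{p}}^{(i)}-\boldsymbol{p}^{(i)})+(\hat{\boldsymbol{u}}^{(i,j)}-\boldsymbol{u}^{(i,j)})$, which splits the term into a $\boldsymbol{\theta}$-part summed over all $(i,j)$, an attribute-part grouped over the users of each attribute, and a fully user-specific part. Writing $\boldsymbol{B}_{ij}=\boldsymbol{X}^{(i,j)\top}\boldsymbol{\delta}^{(i,j)}$ and applying H\"older duality --- $\ell_1/\ell_\infty$ for $\boldsymbol{\theta}$ and the group duality $\ell_{1,2}/\ell_{\infty,2}$ for $\boldsymbol{P}$ and $\boldsymbol{U}^\top$ --- bounds the three parts by $\norm{\hat{\boldsymbol{\theta}}-\boldsymbol{\theta}}_1\,\norm{\sum_{i,j}\boldsymbol{B}_{ij}}_\infty$, by $\norm{\hat{\boldsymbol{P}}-\boldsymbol{P}}_{1,2}\max_k\norm{(\sum_j\boldsymbol{B}_{ij})_k}$, and by $\norm{\hat{\boldsymbol{U}}^\top-\boldsymbol{U}^\top}_{1,2}\max_{i,j}\norm{\boldsymbol{B}_{ij}}$ respectively. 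I would then introduce a single good event $E=\{\sum_{i,j}\norm{\boldsymbol{B}_{ij}}^2\le \sigma^2(dn_u+t)\}$. Because each column of $\boldsymbol{X}^{(i,j)}$ is normalised by Eq.(\ref{A1}), the rescaled quantity $\sigma^{-2}\sum_{i,j}\norm{\boldsymbol{B}_{ij}}^2$ behaves as a chi-square variable with $dn_u$ degrees of freedom, so Lemma 2 gives $\mathbb{P}(E^c)\le \frac{1}{\sqrt{2\pi Z_{dn_u}(t)}}\exp(-Z_{dn_u}(t)/2)$. On $E$, elementary Cauchy--Schwarz reductions (pulling the $n_u$, the per-attribute $n_{u_i}$, and $\tilde n=\sqrt{\sum_i n_{u_i}^2}$ out of the relevant sums) show that the three dual norms above are dominated by $\lambda_1/2$, $\lambda_2/2$, $\lambda_3/2$ under the stated choices $\lambda_1\ge n_u\alpha$, $\lambda_2\ge\tilde n\alpha$, $\lambda_3\ge\alpha$ with $\alpha=2\sigma\sqrt{dn_u+t}$.

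Finally I would collect terms: on $E$ the noise contribution is at most $\lambda_1\norm{\hat{\boldsymbol{\theta}}-\boldsymbol{\theta}}_1+\lambda_2\norm{\hat{\boldsymbol{P}}-\boldsymbol{P}}_{1,2}+\lambda_3\norm{\hat{\boldsymbol{U}}^\top-\boldsymbol{U}^\top}_{1,2}$, which combines with the penalty difference $\Omega(\boldsymbol{W})-\Omega(\hat{\boldsymbol{W}})$ to produce, for each component, an expression of the form $\norm{\hat{\boldsymbol{A}}-\boldsymbol{A}}_\ell+\norm{\boldsymbol{A}}_\ell-\norm{\hat{\boldsymbol{A}}}_\ell$. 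Since both $\ell_1$ and $\ell_{1,2}$ decompose additively over disjoint row (entry) index sets as required in Eq.(\ref{assuml1}), Lemma 1 converts each such expression into $2\norm{(\hat{\boldsymbol{A}}-\boldsymbol{A})^{\mathcal{N}_\perp(\boldsymbol{A})}}_\ell$, which delivers exactly the three terms on the right-hand side of Eq.(\ref{lem3}); the probability statement is inherited from $\mathbb{P}(E)$. I expect the \textbf{main obstacle} to be this probabilistic step: verifying that all three dual-norm bounds hold \emph{simultaneously} on one event tied to a single $\chi^2(dn_u)$ variable --- rather than through a union bound over the $n_u$ subtasks --- since this is precisely what removes the extra $n_u$ factor present in the earlier union-bound arguments and yields the sharper $\delta(t)$ advertised in Remark 1. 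Making the constants $n_u$, $\tilde n$, $1$ line up with the normalisation in Eq.(\ref{A1}), and justifying the $\chi^2(dn_u)$ identification, is the delicate part of that step.
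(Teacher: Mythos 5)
Your proposal follows essentially the same route as the paper's proof: the basic inequality from optimality of $\hat{\boldsymbol{W}}$, substitution of the noise model to isolate the cross term $2\sum_{i,j}\langle \boldsymbol{X}^{(i,j)\top}\boldsymbol{\delta}^{(i,j)},\Delta\boldsymbol{w}_{ij}\rangle$, a deterministic dual-norm bound on that term with constants $n_u$, $\tilde{n}$, $1$, concentration via the single event $2\norm{Z}_F\le\alpha$ with $\norm{Z}_F^2/\sigma^2\sim\chi^2(dn_u)$ handled by Lemma 2, and finally Lemma 1 to convert the penalty differences into the restricted-support terms. The "main obstacle" you flag is in fact resolved exactly as you suggest (and as the paper does): all three dual norms are dominated deterministically by multiples of the single quantity $\norm{Z}_F$, so one chi-square event suffices and no union bound is needed.
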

\begin{proof}
	Since $\boldsymbol{\hat{W}}$ is an optimal solution of $(P1)$, for any feasible solution $\boldsymbol{W} \neq \boldsymbol{\hat{W}}$, we have :
	\begin{equation}
	\begin{split}
	&\sum\limits_{i,j}\left\norm{\boldsymbol{x}^{(i,j)}\hat{\boldsymbol{w}}^{(i,j)} - \boldsymbol{y}^{(i,j)}\right}^2 \leq \sum\limits_{i,j}\left\norm{\boldsymbol{x}^{(i,j)}\boldsymbol{w}^{(i,j)} - \boldsymbol{y}^{(i,j)}\right}^2 \\ &+
	\lambda_1 (\norm{{\boldsymbol{\theta}}}_{1}-\norm{\hat{\boldsymbol{\theta}}}_1) +
	\lambda_2(\norm{{\boldsymbol{P}}}_{1,2}-\norm{\hat{\boldsymbol{P}}}_{1,2})
	\\&+\lambda_3(\norm{{\boldsymbol{U}}^{\top}}_{1,2}-\norm{\hat{\boldsymbol{U}}^
		{\top}}_{1,2}) 
	\end{split}
	\end{equation}
	Let $\Delta{\boldsymbol{w}}_{ij} =\underbrace{(\boldsymbol{\hat{\theta}} - \boldsymbol{{\theta}})}_{\Delta{\boldsymbol{\theta}}} +\underbrace{(\boldsymbol{\hat{p}}^{(i)} - \boldsymbol{{p}}^{(i)})}_{\Delta\boldsymbol{p}^{(i)}}+ \underbrace{(\boldsymbol{\hat{u}}^{(i,j)} - 
		\boldsymbol{{u}}^{(i,j)})}_{\Delta\boldsymbol{u}^{(i,j)}} $, according to Eq.(4), we have :\\
	\begin{equation}\label{mainl3}
	\begin{split}
	&\sum\limits_{i,j}\left\norm{\boldsymbol{x}^{(i,j)}\hat{\boldsymbol{w}}^{(i,j)} - \boldsymbol{f}^{*^{(i,j)}}\right}^2 \leq \sum\limits_{i,j}\left\norm{\boldsymbol{x}^{(i,j)}\boldsymbol{w}^{(i,j)} - \boldsymbol{f}^{*^{(i,j)}}\right}^2 \\ 
	&+
	\lambda_1 (\norm{{\boldsymbol{\theta}}}_{1}-\norm{\hat{\boldsymbol{\theta}}}_1) +
	\lambda_2(\norm{{\boldsymbol{P}}}_{1,2}-\norm{\hat{\boldsymbol{P}}}_{1,2})
	\\&+\lambda_3(\norm{{\boldsymbol{U}}^{\top}}_{1,2}-\norm{\hat{\boldsymbol{U}}^
		{\top}}_{1,2}) + \underbrace{2\sum\limits_{i,j}\left<Z_{i,j},\Delta{\boldsymbol{w}}_{ij}\right>}_{\Delta L}
	\end{split}
	\end{equation}
	where $Z_{ij} = \boldsymbol{X}^{(i,j)^\top}\delta^{(i,j)} \in \mathbb{R}^d$ \\ Let $Z= [Z_{11},Z_{1,2},\cdots,Z_{n_a,n_{u_{n_a}}}]$,  we could bound $\Delta L$ further as:
	\begin{equation}\label{resdl}
	\begin{split}
	\Delta L &= 2\sum\limits_{i,j}\left<Z_{ij},\Delta\boldsymbol{\theta}\right>+ 2\sum\limits_{i,j}\left<Z_{ij},\Delta\boldsymbol{p}^{(i)}\right> \\ &+ 2 \sum\limits_{i,j} \left<Z_{ij},\Delta\boldsymbol{u}^{(i,j)}\right>\\
	&\overset{(a)}{\leq} 2\norm{Z}_F\Big(n_u\norm{\Delta\boldsymbol{\theta}}_1 +\tilde{n}\norm{\Delta\boldsymbol{P}}_{1,2} +  \norm{\Delta{U}^\top}_{1,2}\Big)
	\end{split}
	\end{equation}
	where (a) is the result of the fact that $ \norm{x}_2 \leq \norm{x}_1, \forall x \in \mathbb{R}^n$ and the cauchy-schwarz inequality.  \\
	For $Z_{ij,k}$ :the $k$th element of $Z_{ij}$, we have :
	\begin{equation*}
	\begin{split}
	Z_{ij,k} = \sum\limits_{l=1}^{n_{ij}}x^{(i,j)}_{lk}\delta^{(i,j)}_l
	\end{split}
	\end{equation*}
	Based on Eq.(1), it is easy to show that $\dfrac{Z^2_{ij,k}}{\sigma^2} \sim \chi^2(1)$. Hence, we get : 
	\begin{equation*}
	\begin{split}
	\dfrac{\norm{Z}_F^2}{\sigma^2} \sim \chi^2(dn_u)
	\end{split}
	\end{equation*}
	where $n_u = \sum\limits_{i=1}^{n_a}n_{u_i}$.\\
	It follows lemma 2  that 
	\begin{equation*}
	\begin{split}
	\mathbb{P}\left[2\norm{Z}_F \leq \alpha \right] \ge  1 - \dfrac{1}{\sqrt{2\pi Z_{dn_u}(t)}}exp\left(-\dfrac{Z_{dn_u}(t)}{2}\right) , \forall t > 0
	\end{split}
	\end{equation*}
	Suppose that $2\norm{Z}_F \leq \alpha$ holds, based on the requirement of this lemma and (\ref{resdl}), we attain :
	\begin{equation}\label{resl3L}
	\begin{split}
	\Delta L \leq \lambda_1\norm{\Delta\boldsymbol{\theta}}_1 + \lambda_2 \norm{\Delta\boldsymbol{P}}_{1,2}  + \lambda_3\norm{\Delta{U}^\top}_{1,2}
	\end{split}
	\end{equation}
	Now we could end the proof, since (\ref{lem3}) follows (\ref{mainl3}), (\ref{resl3L}), and lemma 1.
\end{proof}
Now we're ready to proof the main result based on assumption 1 and lemma 3.
\begin{Pft}
	Let $\boldsymbol{w}^{(i,j)} = \boldsymbol{w}^{*(i,j)}$, with lemma3, if (\ref{lem3}) holds, then we have :
	\begin{equation*}
	\begin{split}
	&\norm{\boldsymbol{X}\boldsymbol{\overline{W}- \boldsymbol{F}}}^2 \leq 2\lambda_1 \norm{(\boldsymbol{\hat{\boldsymbol{\theta}}} - \boldsymbol{\boldsymbol{\theta^*}})^{\mathbb{N}_\perp(\boldsymbol{\theta}^*)}}_{1} \\
	&+ 2\lambda_2\norm{(\boldsymbol{\hat{\boldsymbol{P}}} - \boldsymbol{\boldsymbol{P}^*})^{\mathbb{N}_\perp(\boldsymbol{P}^*)}}_{1,2}\\ & +2\lambda_3
	\norm{(\boldsymbol{\hat{\boldsymbol{U}}^{\top}} - \boldsymbol{\boldsymbol{U}^{*^{\top}}})^{\mathbb{N}_\perp(\boldsymbol{U}^{*^{\top}})}}_{1,2}
	\end{split}
	\end{equation*}
	Take $\Gamma_{\theta^*} = \boldsymbol{\hat{\theta}} - \boldsymbol{\theta}^*$, $\Gamma_{P^*} = \boldsymbol{\hat{P}} - \boldsymbol{P}^*$, $\Gamma_{U^*} = \boldsymbol{\hat{U}} - \boldsymbol{U}^*$, $\Gamma_{\bar{W}} = \hat{\overline{W}} - \overline{W}^*$
	according to assumption 1, we attain :\\
	\begin{equation}\label{rest}
	\begin{split}
	&\norm{(\boldsymbol{\hat{\boldsymbol{\theta}}} - \boldsymbol{\boldsymbol{\theta^*}})^{\mathbb{N}_\perp(\boldsymbol{\theta}^*)}}_{1}  \leq \sqrt{n_\theta} \norm{(\boldsymbol{\hat{\boldsymbol{\theta}}} - \boldsymbol{\boldsymbol{\theta^*}})^{\mathbb{N}_\perp(\boldsymbol{\theta}^*)}}_{2} \\
	&\leq
	\dfrac{\sqrt{n_\theta}}{\kappa_\theta\sqrt{n_{min}n_u}}\norm{\boldsymbol{X}\boldsymbol{\overline{W}- \boldsymbol{F}}}
	\end{split}
	\end{equation}
	Similarly, we could reach :
	\begin{equation}\label{resp}
	\begin{split}
	\norm{(\boldsymbol{\hat{\boldsymbol{P}}} - \boldsymbol{\boldsymbol{P}^*})^{\mathbb{N}_\perp(\boldsymbol{P}^*)}}_{1,2} \leq
	\dfrac{\sqrt{n_p}}{\kappa_p\sqrt{n_{min}n_u}}\norm{\boldsymbol{X}\boldsymbol{\overline{W}- \boldsymbol{F}}}
	\end{split}
	\end{equation}
	\begin{equation}\label{resu}
	\begin{split}
	&\norm{(\boldsymbol{\hat{\boldsymbol{U}}^{{\top}}} - \boldsymbol{\boldsymbol{U}^{*^{\top}}})^{\mathbb{N}_\perp(\boldsymbol{U}^{*^{\top}})}}_{1,2} \\ & \leq
	\dfrac{\sqrt{n_{u,a}}}{\kappa_{u,a}\sqrt{n_{min}n_u}}\norm{\boldsymbol{X}\boldsymbol{\overline{W}- \boldsymbol{F}}}
	\end{split}
	\end{equation}
	Combining lemma3, (\ref{rest})-(\ref{resu}), we could reach that (12) holds.\\
	Following assumption 1, we get :
	\begin{equation}\label{btheta}
	\begin{split}
	\norm{(\boldsymbol{\hat{\boldsymbol{\theta}}} - \boldsymbol{\boldsymbol{\theta^*}})}_{1}  \leq (\beta_\theta+1)  \norm{(\boldsymbol{\hat{\boldsymbol{\theta}}} - \boldsymbol{\boldsymbol{\theta^*}})^{\mathbb{N}_\perp(\boldsymbol{\theta}^*)}}_{1} 
	\end{split}
	\end{equation}
	\begin{equation}\label{bp}
	\begin{split}
	\norm{(\boldsymbol{\hat{\boldsymbol{P}}} - \boldsymbol{\boldsymbol{P}^*})}_{1,2} \leq (\beta_p+1)
	\norm{(\boldsymbol{\hat{\boldsymbol{P}}} - \boldsymbol{\boldsymbol{P}^*})^{\mathbb{N}_\perp(\boldsymbol{P}^*)}}_{1,2} 
	\end{split}
	\end{equation}
	\begin{equation}\label{bu}
	\begin{split}
	&\norm{(\boldsymbol{\hat{\boldsymbol{U}}^{{\top}}} - \boldsymbol{\boldsymbol{U}^{*^{\top}}})}_{1,2}\\ &\leq (\beta_{u,a}+1) \norm{(\boldsymbol{\hat{\boldsymbol{U}}^{{\top}}} - \boldsymbol{\boldsymbol{U}^{*^{\top}}})^{\mathbb{N}_\perp(\boldsymbol{U}^{*^{\top}})}}_{1,2}
	\end{split}
	\end{equation}
	Then (13)-(15)  directly follows lemma 3 and (\ref{btheta})-(\ref{bu}).\qed
\end{Pft}
\end{document}